\documentclass{article}

\usepackage[preprint]{neurips_2026}

\usepackage[utf8]{inputenc}
\usepackage[T1]{fontenc}
\usepackage{hyperref}
\usepackage{url}
\usepackage{booktabs}
\usepackage{amsfonts}
\usepackage{amsmath}
\usepackage{nicefrac}
\usepackage{microtype}
\usepackage{xcolor}
\usepackage{amsthm}
\usepackage{graphicx}

\usepackage{amsmath,amsfonts,bm}

\def\eqref#1{equation~\ref{#1}}

\def\1{\bm{1}}

\DeclareMathAlphabet{\mathsfit}{\encodingdefault}{\sfdefault}{m}{sl}
\SetMathAlphabet{\mathsfit}{bold}{\encodingdefault}{\sfdefault}{bx}{n}

\theoremstyle{plain}
\newtheorem{theorem}{Theorem}[section]
\newtheorem{proposition}[theorem]{Proposition}

\theoremstyle{definition}
\newtheorem{definition}[theorem]{Definition}

\theoremstyle{remark}
\newtheorem{remark}[theorem]{Remark}

\title{Is Escalation Worth It? A Decision-Theoretic Characterization of LLM Cascades}

\author{Dylan Bouchard\thanks{Correspondence to \texttt{dbouchard92@gmail.com}. ORCID: \href{https://orcid.org/0009-0004-9233-2324}{0009-0004-9233-2324}.}}
\newcommand{\arxivcodefootnote}{\footnote{Code is available at \href{https://github.com/dylanbouchard/llm-cascade-frontiers}{github.com/dylanbouchard/llm-cascade-frontiers}.}}

\begin{document}

\maketitle

\begin{abstract}
Model cascades, in which a cheap LLM defers to an expensive one on
low-confidence queries, are widely used to navigate the cost-quality tradeoff
at deployment. Existing approaches largely treat the deferral threshold as an
empirical hyperparameter, with limited guidance on the geometry of the
resulting cost-quality frontier over a model pool. We develop a
decision-theoretic framework grounded in constrained optimization and duality.
For a two-model cascade, we establish piecewise concavity of the cost-quality
frontier on decreasing-benefit regions of the confidence support, with
reciprocal shadow prices linking the budget- and quality-constrained
formulations. Given a pool of $k$ models, we characterize the frontier
achievable by deterministic two-model threshold cascades as the pointwise
envelope over $\binom{k}{2}$ pairwise cascades, with switching points where
the optimal pair changes. For $k$-model cascades, we derive first-order
conditions in which a single shadow price equalizes marginal quality-per-cost
across stage boundaries. We validate the framework on five benchmarks (MATH,
MMLU, TriviaQA, SimpleQA, LiveCodeBench) across eight models from five
providers. Within the deterministic threshold-cascade class, full fixed chains
underperform the pairwise envelope, and optimized subsequence cascades do not
deliver practically meaningful held-out gains over it. A lightweight
pre-generation router exceeds the best cascade policy on four of five datasets,
mainly because it avoids the cheap model's generation cost on queries sent
directly to a larger model rather than because of a stronger routing signal.
These results suggest that cascade performance is limited primarily by structural cost, since
cascades pay the cheap model before any escalation decision, rather than by a
shortage of intermediate stages.
\end{abstract}

\section{Introduction}
\providecommand{\arxivcodefootnote}{}

The rapid proliferation of large language models (LLMs) has created a
fundamental tension in deployment: the most capable models are
prohibitively expensive to query at scale, while cheaper alternatives
frequently fall short of application-specific quality requirements. This
cost-quality tradeoff has prompted growing interest in \textit{model
cascades} (see \citet{moslem2026dynamicmodelroutingcascading} for a detailed survey), in which a cheap
model handles the majority of queries and a more expensive model is
reserved for harder instances, identified by a confidence score $s_L$
(typically derived from token-level log-probabilities or related
uncertainty quantification signals) falling below a threshold $\tau$.
Existing methods primarily treat $\tau$ as a hyperparameter tuned on held-out data
\citep{chen2023frugalgptuselargelanguage,yue2024largelanguagemodelcascades},
with no formal characterization of the resulting cost-quality frontier, no
analytical optimality conditions for cascades beyond two models, and no
framework for choosing among the $\binom{k}{2}$ two-model cascade
configurations available from a pool of $k$ models.

We develop a decision-theoretic framework grounded in constrained
optimization and duality. For the two-model cascade, minimizing expected
cost subject to an expected quality floor is dual to maximizing expected
quality subject to a budget constraint. In particular, on decreasing-benefit regions of
the confidence support the cost-quality frontier is concave, and at any
interior optimum the Lagrange multipliers of the two formulations are
reciprocals with interpretable shadow-price meanings. Given a pool of $k$ models,
we characterize the frontier achievable by deterministic two-model
threshold cascades as the pointwise envelope over $\binom{k}{2}$
pairwise frontiers, with switching points at which the optimal pair
transitions and the shadow price of quality generically jumps
discontinuously.
 For $k$-model threshold cascades we derive first-order conditions under which a single
shadow price $\lambda$ equates decision-boundary expected escalation benefit to
$\lambda$ times decision-boundary expected downstream cost, equivalently equalizing marginal
quality-per-cost across active stage boundaries. These conditions are stated in terms of conditional expectations estimable from
calibration data, recover the two-model condition as a special case, and
diagnose when additional stages in a fixed cascade chain have positive marginal
value.

We validate the framework on five benchmarks (MATH levels 3--5, MMLU,
TriviaQA, SimpleQA, LiveCodeBench) across eight models from five
providers.\arxivcodefootnote{} In held-out split experiments, both the non-dominated model
pool and the valid pair set are selected on calibration data only. Empirically, the pairwise envelope
effectively captures the deterministic threshold-cascade frontier: full fixed chains underperform the pairwise envelope, while
optimized subsequence cascades do not deliver practically meaningful
held-out gains over it. As a diagnostic baseline, we also evaluate a
lightweight pre-generation router over the calibration-selected model pool,
which exceeds the best cascade policy on four of five datasets. Score-choice
ablations suggest that this advantage is primarily structural rather than a
consequence of stronger routing signal quality.

Our contributions are as follows:
\begin{itemize}
\item \textbf{Pairwise envelope over a model pool.} We characterize the
frontier achievable by all deterministic two-model threshold cascades drawn from a pool of $k$
models as the pointwise envelope over $\binom{k}{2}$ single-threshold
frontiers, with switching points at which the optimal pair changes as a
function of budget. This matters operationally: for any fixed budget, an
envelope point is implemented by a single selected two-model cascade,
rather than by running a longer selected cascade chain at inference time.
Empirically, at 90\% of ceiling quality, the pairwise envelope reduces 
cost by up to 79.5\% relative to always using the highest-accuracy model, 
matches or exceeds the cost reduction of optimized multi-stage subsequence 
cascades on all five benchmarks, and consistently outperforms full fixed chains.
    \item \textbf{Structural characterization of the frontier.} We
    establish monotonicity under a decision-boundary dominance condition
    and piecewise concavity on decreasing-benefit regions of the
    confidence support, and give the Lagrange multipliers of the dual
    formulations reciprocal shadow-price interpretations.
    \item \textbf{First-order conditions.}
    In contrast to prior analytical
    formulations of the LLM cascade optimization problem, which
    do not model score-conditional accuracy,
    we allow each model's expected quality to depend on the
    confidence scores of earlier stages, reflecting the correlation between confidence
    and query difficulty. We derive first-order conditions
    that account for this dependence, stated in terms of conditional
    expectations estimable from calibration data, and then test the
    practical value of additional stages by comparing full fixed chains
    and optimized subsequence cascades to the pairwise envelope on
    held-out splits.

    \item \textbf{Cascading vs.\ diagnostic routing.} We show that a pre-generation
    diagnostic router exceeds the best cascade policy on four of five datasets,
    including three where its embedding signal is weaker than the cascade's
    white-box uncertainty signal, and decompose its advantage into structural
    (avoiding the cheap model's cost) and informational (signal quality)
    components. In our experiments, the cascade envelope is competitive
    primarily when the lightweight pre-generation features are uninformative.
\end{itemize}

\section{Related Work}
\label{sec:related}

\paragraph{LLM routing and cascading.}
A large body of recent work studies cost-efficient LLM deployment via model selection. 
Routing methods select a single model per query prior to inference, typically using learned routers trained on query representations or preference data 
\citep{ong2025routellmlearningroutellms, zhuang2024embedllmlearningcompactrepresentations, feng2025graphroutergraphbasedrouterllm, wang2025reasonsemanticroutervllm}. 
Several works incorporate explicit constraints such as budget, latency, or capacity into routing decisions 
\citep{mei2025omnirouterbudgetperformancecontrollable, markovicvoronov2026robustbatchlevelqueryrouting, ding2025bestrouteadaptivellmrouting}. 
Cascading methods instead query models sequentially and decide whether to accept or escalate a response based on signals observed \emph{after} generation, 
including learned confidence estimators, token-level probabilities, and self-consistency signals
\citep{chen2023frugalgptuselargelanguage,yue2024largelanguagemodelcascades,aggarwal2025automixautomaticallymixinglanguage,jitkrittum2024doesconfidencebasedcascadedeferral,gupta2024languagemodelcascadestokenlevel}. 
Extensions consider reasoning-intensive settings \citep{valkanas2025c3pooptimizedlargelanguage}, uncertainty-aware routing with statistical guarantees \citep{su2025cprouteruncertaintyawarerouterllm}, deployment constraints such as privacy \citep{zhang2025privacypreservedllmcascadecotenhanced}, and unified routing-cascading formulations \citep{dekoninck2025unifiedapproachroutingcascading}. 
We focus on the cascading setting.

\paragraph{Analytical characterization of the cost-quality tradeoff.}
Several prior works formulate cascade design as constrained optimization
\citep{zhang2024efficientcontextualllmcascades,gupta2024languagemodelcascadestokenlevel,chen2023frugalgptuselargelanguage,jitkrittum2024doesconfidencebasedcascadedeferral},
but do not characterize concavity conditions, dual shadow-price structure,
or the geometry of threshold-cascade frontiers over a model pool.
\citet{chen2023frugalgptuselargelanguage} minimize expected cost subject
to an accuracy constraint, selecting thresholds by empirical search rather
than deriving first-order necessary conditions for optimality.
\citet{valkanas2025c3pooptimizedlargelanguage} introduces probabilistic
cost constraints and provides generalization guarantees, but does not
characterize the structure of the achievable cost-quality tradeoff.
\citet{dekoninck2025unifiedapproachroutingcascading} derives optimal
routing, cascading, and cascade-routing strategies from query- and
output-dependent quality estimates.
We address these gaps by modeling expected quality conditional on
stage-wise confidence scores, and deriving piecewise concavity on
decreasing-benefit regions, reciprocal shadow-price relationships, and
the pairwise envelope as the frontier achievable by two-model threshold
cascades.
We also study whether $k$-model cascades ($k > 2$) improve on the best
two-model cascade from the same pool, deriving stagewise first-order necessary conditions for
multi-stage cascades and providing held-out evidence that optimized
multi-stage threshold cascades do not materially improve on the pairwise
envelope in our evaluated settings.

\section{The $k$-Model Cascade Framework}
\label{sec:framework}

\subsection{Setup and Cascade Mechanism}
\label{sec:setup}

Let $\mathcal{X}$ denote the space of queries and
let $x \sim \mathcal{P}$ be drawn from the deployment
distribution. A $k$-model cascade consists of an ordered model
pool $\{\mathcal{M}_1, \ldots, \mathcal{M}_k\}$ together with a
threshold vector $\boldsymbol{\tau} = (\tau_1, \ldots, \tau_{k-1})$ that
governs escalation between adjacent stages.
When queried, model $i$ incurs a random per-query cost $C_i(x)$ and
produces a response $y_i(x)$ with quality
$U_i(x) := u(y_i(x), y^*(x)),$
where $y^*(x)$ is the ground-truth answer for $x$, available only offline.
 Let $c_i \equiv \mathbb{E}[C_i(x)]$ and $\mathbb{E}[U_i]$ denote the expected cost and quality, respectively.
We assume the model pool is \emph{non-dominated}: the models are ordered such that $c_1 < c_2 < \cdots < c_k$ and $\mathbb{E}[U_1] < \mathbb{E}[U_2] < \cdots < \mathbb{E}[U_k]$.\footnote{Non-dominance is a simplifying assumption used to order the pool and
reduce the number of candidate pairs. Average dominance does not preclude a
model from being useful on particular conditional subpopulations; ruling this
out would require a stronger conditional-dominance condition on all reachable
score-prefix regions.}
Each non-terminal model $i < k$ additionally produces a scalar confidence
score $s_i(x)$, available at no additional model-call cost from the same
call used to generate $y_i(x)$.
Given $\boldsymbol{\tau}$, the cascade escalates from
model $i$ to model $i+1$ whenever $s_i(x) < \tau_i$, while the terminal
model $k$ always returns its response if stage $k$ is reached.\footnote{Hereafter, we carry the query argument $x$ explicitly in per-query random variables; all expectations and probabilities are over $x \sim \mathcal{P}$ unless otherwise noted.}

\paragraph{Stopping index, cost, and quality.} Let
$I(x;\boldsymbol{\tau}) := \min\bigl(\{j < k : s_j(x) \ge \tau_j\} \cup \{k\}\bigr)$
denote the stopping index on query $x$; including $\{k\}$ ensures that the terminal model always stops.
The cascade returns $y_{I(x;\boldsymbol{\tau})}(x)$, with output quality
and total cost
\begin{equation}
    U(x;\boldsymbol{\tau}) := U_{I(x;\boldsymbol{\tau})}(x),
    \qquad
    C(x;\boldsymbol{\tau}) := \sum_{j=1}^{I(x;\boldsymbol{\tau})} C_j(x).
    \label{eq:quality_cost}
\end{equation}

\paragraph{Joint score distribution.}
Let $\mathbf{s}(x) = (s_1(x), \ldots, s_{k-1}(x))$ denote the vector of confidence scores that govern escalation decisions,
and let $\mathbf{s}_{<i}(x) := (s_1(x), \ldots, s_{i-1}(x))$ denote the prefix through stage $i-1$. Under $x \sim \mathcal{P}$, $\mathbf{s}(x)$ is a random vector with joint density $f_{\mathbf{s}}$, which need not factorize due to shared dependence on query difficulty. We write $f_{s_i \mid \mathbf{s}_{<i}}$ for the conditional density of $s_i$ given the prefix, and suppress the $x$-argument when no ambiguity results.\footnote{We assume scores are continuously distributed. Discrete signals (e.g., self-consistency vote counts) can be handled by replacing densities with mass functions and integrals with sums.}

\paragraph{Conditional quality.} For non-terminal models $j < k$, let
$m_j(\mathbf{s}_{1:j}) := \mathbb{E}[U_j(x) \mid s_1(x), \ldots, s_j(x)]$
denote the expected quality of model $j$ conditional on the scores observed
through stage $j$; for the terminal model,
$m_k(\mathbf{s}_{<k}) := \mathbb{E}[U_k(x) \mid s_1(x), \ldots, s_{k-1}(x)]$.\footnote{We assume that
all conditional expectations (e.g., continuation probabilities and conditional accuracies)
are well-defined and continuous in the threshold $\tau$. This ensures existence of optimal
thresholds and justifies differentiation under the expectation.} Because confidence scores correlate with query difficulty, thresholds affect
cascade quality through two channels: the probability that a query reaches each
stage, and the difficulty composition of the queries reaching that stage.

\paragraph{Continuation values.} Let $I^{>i}(x;\boldsymbol{\tau}) := \min\bigl(\{j : i < j < k,\; s_j(x) \ge \tau_j\}
\cup \{k\}\bigr)$ denote the stopping index restricted to stages after $i$. The \emph{quality continuation value} and \emph{cost continuation value} at stage $i+1$ are functions of the prefix scores $\mathbf{s}_{1:i}$\footnote{$V_{i+1}$ and $W_{i+1}$ are the expected quality and expected additional cost of the downstream cascade given the prefix $\mathbf{s}_{1:i}$; both are well-defined on the support of $\mathbf{s}_{1:i}$ and can be evaluated at $s_i = \tau_i$.}:
\begin{align}
V_{i+1}(\mathbf{s}_{1:i};\boldsymbol{\tau}) &:= \mathbb{E}\bigl[U_{I^{>i}(x;\boldsymbol{\tau})}(x) \;\big|\; s_1(x) = s_1,\ldots,s_i(x) = s_i \bigr],
\label{eq:V_def} \\
W_{i+1}(\mathbf{s}_{1:i};\boldsymbol{\tau}) &:= \mathbb{E}\!\left[\sum_{\ell=i+1}^{I^{>i}(x;\boldsymbol{\tau})} C_\ell(x) \;\middle|\; s_1(x) = s_1,\ldots,s_i(x) = s_i \right].
\label{eq:W_def}
\end{align}

\paragraph{The constrained optimization.} The practitioner faces two dual problems:
\begin{align}
    \textbf{(P1)} \quad
        &\min_{\boldsymbol{\tau}} \;\mathbb{E}[C(x;\boldsymbol{\tau})]
        \quad \text{s.t.} \quad \mathbb{E}[U(x;\boldsymbol{\tau})] \geq Q,
        \label{eq:P1} \\
    \textbf{(P2)} \quad
        &\max_{\boldsymbol{\tau}} \;\mathbb{E}[U(x;\boldsymbol{\tau})]
        \quad \text{s.t.} \quad \mathbb{E}[C(x;\boldsymbol{\tau})] \leq B.
        \label{eq:P2}
\end{align}
Both describe the same efficient tradeoff geometrically; on concave
segments, standard Lagrangian duality applies with the shadow-price
interpretations developed in Section~\ref{sec:twomodel}. We focus on
\textbf{(P2)} below.

\paragraph{Pareto frontier.} The \emph{Pareto frontier} of the cascade is
the value function of \textbf{(P2)} as the budget varies:
\begin{equation}
    U^\dagger(B) := \sup_{\boldsymbol{\tau} \in [0,1]^{k-1}}
    \bigl\{\mathbb{E}[U(x;\boldsymbol{\tau})] :
    \mathbb{E}[C(x;\boldsymbol{\tau})] \leq B\bigr\}.
    \label{eq:pareto_frontier}
\end{equation}

\paragraph{Integral form over the confidence support.} The expectations
$\mathbb{E}[U(x;\boldsymbol{\tau})]$ and
$\mathbb{E}[C(x;\boldsymbol{\tau})]$ admit integral representations over
the confidence support that make the mechanism transparent. Define the
stopping regions
\[
R_i(\boldsymbol{\tau}) =
\begin{cases}
\{\mathbf{s}: s_1 \ge \tau_1\}, & i = 1,\\
\{\mathbf{s}: \mathbf{s}_{<i} < \boldsymbol{\tau}_{<i},\; s_i \ge \tau_i\}, & 1 < i < k,\\
\{\mathbf{s}: \mathbf{s}_{<k} < \boldsymbol{\tau}_{<k}\}, & i = k,
\end{cases}
\]
where vector inequalities are elementwise. Then expected
quality decomposes stagewise as
\begin{equation}
\mathbb{E}[U(x;\boldsymbol{\tau})]
=
\sum_{i=1}^{k-1}
\int_{R_i(\boldsymbol{\tau})}
m_i(\mathbf{s}_{1:i}) f_{\mathbf{s}_{1:i}}(\mathbf{s}_{1:i})\,d\mathbf{s}_{1:i}
+
\int_{R_k(\boldsymbol{\tau})}
m_k(\mathbf{s}_{<k}) f_{\mathbf{s}_{<k}}(\mathbf{s}_{<k})\,d\mathbf{s}_{<k},
    \label{eq:integral_quality}
\end{equation}
and expected cost decomposes analogously as the sum of per-stage costs over reached regions:
\begin{equation}
    \mathbb{E}[C(x;\boldsymbol{\tau})] = c_1 + \sum_{i=2}^{k} \int_{\{\mathbf{s}_{<i} < \boldsymbol{\tau}_{<i}\}} \mathbb{E}[C_i(x) \mid \mathbf{s}_{<i}]\, f_{\mathbf{s}_{<i}}(\mathbf{s}_{<i}) \, d\mathbf{s}_{<i}.
    \label{eq:integral_cost}
\end{equation}
Thus $\boldsymbol{\tau}$ affects expected quality by shifting probability mass
across stopping regions and by changing the difficulty composition within those
regions. 
The structural results below follow from how expected escalation benefit,
$V_{i+1}(\mathbf{s}_{1:i};\boldsymbol{\tau}) - m_i(\mathbf{s}_{1:i})$,
varies with the boundary score $s_i$.

\subsection{First-Order Optimality Conditions}
\label{sec:foc}

Applying standard Lagrangian methods for constrained resource allocation to the LLM threshold-cascade problem, the following theorem gives the stationarity condition in terms of conditional expectations that are estimable from calibration data (the continuation values $V_{i+1}$ and $W_{i+1}$ as functions of the prefix $\mathbf{s}_{1:i}$). Appendix~\ref{app:p1_fonc} gives the complete KKT conditions for both constrained formulations.

\begin{theorem}[First-Order Optimality Conditions]
\label{thm:foc}
Consider problem~\eqref{eq:P2} with Lagrangian
$$
\mathcal{L}(\boldsymbol{\tau},\lambda) = \mathbb{E}[U(x;\boldsymbol{\tau})] - \lambda(\mathbb{E}[C(x;\boldsymbol{\tau})] - B),
$$
so that $\lambda \ge 0$ has units of quality per unit cost. Under regularity conditions on $f_{\mathbf{s}}$ (stated precisely in Appendix~\ref{app:supplementary_theory}), at an interior optimum each threshold $\tau_i$ satisfies:
\begin{equation}
\underbrace{\mathbb{E}\Bigl[
V_{i+1}(\mathbf{s}_{1:i};\boldsymbol{\tau}) - m_i(\mathbf{s}_{1:i})
\;\Big|\;
\mathbf{s}_{<i} < \boldsymbol{\tau}_{<i},\; s_i = \tau_i
\Bigr]}_{\text{decision-boundary expected escalation benefit at stage } i}
=\;
\lambda\;
\underbrace{\mathbb{E}\Bigl[
W_{i+1}(\mathbf{s}_{1:i};\boldsymbol{\tau})
\;\Big|\;
\mathbf{s}_{<i} < \boldsymbol{\tau}_{<i},\; s_i = \tau_i
\Bigr]}_{\text{decision-boundary expected downstream cost at stage } i}.
\label{eq:foc}
\end{equation}
\end{theorem}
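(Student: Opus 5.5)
The plan is to differentiate the Lagrangian $\mathcal{L}(\boldsymbol{\tau},\lambda)$ with respect to a single threshold $\tau_i$, holding the others fixed, and set $\partial\mathcal{L}/\partial\tau_i = 0$, which is necessary at an interior optimum. Rather than differentiate the full stagewise sums \eqref{eq:integral_quality}--\eqref{eq:integral_cost} term by term, I will first regroup both objectives around stage $i$. Condition on the prefix $\mathbf{s}_{1:i}$. Queries with $\mathbf{s}_{<i} \not< \boldsymbol{\tau}_{<i}$ stop before stage $i$ and do not depend on $\tau_i$. Queries that reach stage $i$ (that is, $\mathbf{s}_{<i} < \boldsymbol{\tau}_{<i}$) either stop at $i$, contributing quality $m_i(\mathbf{s}_{1:i})$ and no further cost, or escalate when $s_i < \tau_i$. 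In the escalation case, by the tower property and the definitions \eqref{eq:V_def}--\eqref{eq:W_def}, they contribute quality $V_{i+1}(\mathbf{s}_{1:i};\boldsymbol{\tau})$ and additional cost $W_{i+1}(\mathbf{s}_{1:i};\boldsymbol{\tau})$.

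This gives the representation
\[
\mathcal{L} = \text{(terms free of }\tau_i) + \int_{\{\mathbf{s}_{<i}<\boldsymbol{\tau}_{<i}\}} \Bigl[\int_{\tau_i}^{\infty} m_i\, f\,ds_i + \int_{-\infty}^{\tau_i} \bigl(V_{i+1} - \lambda W_{i+1}\bigr) f\, ds_i\Bigr] d\mathbf{s}_{<i},
\]
where $f = f_{\mathbf{s}_{1:i}}$. Crucially, $V_{i+1}$ and $W_{i+1}$ depend only on $\boldsymbol{\tau}_{>i}$, not on $\tau_i$: $I^{>i}$ involves only thresholds $j>i$. So $\tau_i$ enters only through the integration limit.

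Applying the Leibniz rule (justified by the regularity and continuity assumptions in the footnotes and the appendix) gives
\[
\frac{\partial\mathcal{L}}{\partial\tau_i} = \int_{\{\mathbf{s}_{<i}<\boldsymbol{\tau}_{<i}\}} \bigl[V_{i+1} - m_i - \lambda W_{i+1}\bigr](\mathbf{s}_{<i},\tau_i)\, f_{\mathbf{s}_{1:i}}(\mathbf{s}_{<i},\tau_i)\, d\mathbf{s}_{<i}.
\]
Next I factor $f_{\mathbf{s}_{1:i}}(\mathbf{s}_{<i},\tau_i) = f_{s_i}(\tau_i)\,\Pr(\mathbf{s}_{<i}<\boldsymbol{\tau}_{<i}\mid s_i=\tau_i)\cdot g(\mathbf{s}_{<i})$, where $g$ is the conditional density of $\mathbf{s}_{<i}$ given the event $\{\mathbf{s}_{<i}<\boldsymbol{\tau}_{<i},\, s_i=\tau_i\}$. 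The integral then equals a positive scalar times
\[
\mathbb{E}\bigl[V_{i+1}-m_i-\lambda W_{i+1}\mid \mathbf{s}_{<i}<\boldsymbol{\tau}_{<i},\, s_i=\tau_i\bigr].
\]
I need that scalar to be strictly positive, i.e.\ the boundary slice carries mass. This is the regularity condition to state: $f_{s_i}(\tau_i)>0$ and a positive reach probability, which interiority essentially supplies. Dividing it out and using linearity of conditional expectation yields \eqref{eq:foc}.

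The main obstacle is rigor in the regrouping and differentiation step. Three things must be checked. First, the conditional expectation at the measure-zero slice $s_i=\tau_i$ must be well-defined; continuity of $m_i$, $V_{i+1}$, $W_{i+1}$ and $f$ in $s_i$ handles this. Second, differentiating under the integral sign needs a dominating bound, which follows from bounded quality, integrable costs and a locally bounded density. Third, $\tau_i$ also appears implicitly in later stages' reach regions in \eqref{eq:integral_quality}. I will handle that by noting these contributions are exactly the ones absorbed into $V_{i+1}$ and $W_{i+1}$, so no separate boundary terms from stages $j>i$ arise. Complementary slackness and $\lambda\ge 0$ are standard KKT facts, and I will only cite them.
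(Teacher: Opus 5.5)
Your proposal is correct and follows essentially the same route as the paper. You differentiate in $\tau_i$ via the Leibniz rule over the prefix region $\{\mathbf{s}_{<i}<\boldsymbol{\tau}_{<i}\}$, which gives the integrals of $(V_{i+1}-m_i)f_{\mathbf{s}_{1:i}}$ and $W_{i+1}f_{\mathbf{s}_{1:i}}$ evaluated at $s_i=\tau_i$, and then divide by the common positive boundary factor; your $f_{s_i}(\tau_i)\Pr(\mathbf{s}_{<i}<\boldsymbol{\tau}_{<i}\mid s_i=\tau_i)$ equals the paper's $\Pr(\mathbf{s}_{<i}<\boldsymbol{\tau}_{<i})\,f_{s_i\mid \mathbf{s}_{<i}<\boldsymbol{\tau}_{<i}}(\tau_i)$ by Bayes' rule. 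Your explicit tower-property regrouping, together with the observation that $V_{i+1}$ and $W_{i+1}$ do not depend on $\tau_i$, spells out what the paper's sketch compresses into the remark that $\tau_i$ enters only through the indicator in $R_i,\ldots,R_k$.
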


\begin{proof}[Proof sketch]
The threshold $\tau_i$ enters $\mathbb{E}[U(x;\boldsymbol{\tau})]$ and $\mathbb{E}[C(x;\boldsymbol{\tau})]$ only through the indicator $\mathbf{1}[s_i(x) < \tau_i]$ in the stopping regions $R_i, \ldots, R_k$.\footnote{The conditioning event $\{s_i(x)=\tau_i\}$ has measure zero;
conditional expectations at the boundary are defined using the regular
conditional distribution induced by the joint density of
$(\mathbf{s}_{<i},s_i)$, equivalently by disintegration via the
conditional density of $s_i$ given the prefix-reaching event
$\mathbf{s}_{<i}<\boldsymbol{\tau}_{<i}$. The required densities are assumed
continuous and positive at the boundary under the regularity conditions of
Appendix~\ref{app:supplementary_theory}.}
Let $\mathcal{S}_{<i}(\boldsymbol{\tau}) = \{\mathbf{s}_{<i} : s_1 < \tau_1, \ldots, s_{i-1} < \tau_{i-1}\}$ denote the prefix region of score vectors whose associated queries reach stage $i$. Applying the Leibniz integral rule:
\begin{align}
\frac{\partial \mathbb{E}[U(x;\boldsymbol{\tau})]}{\partial \tau_i} &= \int_{\mathcal{S}_{<i}} \bigl[V_{i+1}(\mathbf{s}_{<i},\tau_i;\boldsymbol{\tau}) - m_i(\mathbf{s}_{<i},\tau_i)\bigr]\, f_{\mathbf{s}_{1:i}}(\mathbf{s}_{<i},\tau_i)\,d\mathbf{s}_{<i}, \label{eq:grad_U} \\
\frac{\partial \mathbb{E}[C(x;\boldsymbol{\tau})]}{\partial \tau_i} &= \int_{\mathcal{S}_{<i}} W_{i+1}(\mathbf{s}_{<i},\tau_i;\boldsymbol{\tau})\, f_{\mathbf{s}_{1:i}}(\mathbf{s}_{<i},\tau_i)\,d\mathbf{s}_{<i}, \label{eq:grad_C}
\end{align}
where $f_{\mathbf{s}_{1:i}}$ is the marginal density of $(s_1(x),\ldots,s_i(x))$.\footnote{The optimal policy may lie at the boundary (i.e., always or never escalate). In such cases, the first-order condition is replaced by a one-sided condition. For clarity, we focus on interior optima.} Setting $\partial \mathcal{L}/\partial \tau_i = 0$ and dividing both sides by $\Pr(\mathbf{s}_{<i}(x) < \boldsymbol{\tau}_{<i}) \cdot f_{s_i \mid \mathbf{s}_{<i} < \boldsymbol{\tau}_{<i}}(\tau_i)$, which appears as a common factor, yields~\eqref{eq:foc}.
\end{proof}

\paragraph{Economic interpretation.} The FOC~\eqref{eq:foc} admits a clean economic reading: \emph{at the optimum, the ratio of decision-boundary expected escalation benefit to decision-boundary expected downstream cost is equalized across all stages and equal to the shadow price $\lambda$}. Equivalently, at each decision boundary $s_i = \tau_i$, marginal quality-per-cost $(V_{i+1} - m_i)/W_{i+1} = \lambda$ is the same regardless of stage.
This is the standard optimality condition for constrained resource allocation applied to LLM threshold cascades.

\subsection{Monotonicity}
\label{sec:monotone}

The FOC characterizes interior optima but does not by itself imply that the
cost-quality curve is monotone. The following \emph{decision-boundary
dominance} condition is a local sufficient condition: at each active stage,
marginally escalated queries must benefit in expectation from continuation.

\begin{proposition}[Stagewise Monotonicity]
\label{prop:monotone}
Suppose that, at a threshold vector $\boldsymbol{\tau}$,
\begin{equation}
\mathbb{E}\bigl[V_{i+1}(\mathbf{s}_{1:i};\boldsymbol{\tau}) - m_i(\mathbf{s}_{1:i})
\;\big|\; \mathbf{s}_{<i} < \boldsymbol{\tau}_{<i},\, s_i = \tau_i\bigr] > 0,
\label{eq:stagewise_dominance}
\end{equation}
for every $i = 1, \ldots, k-1$. Then increasing any single threshold $\tau_i$ locally, with the others fixed, strictly increases both $\mathbb{E}[C(x;\boldsymbol{\tau})]$ and $\mathbb{E}[U(x;\boldsymbol{\tau})]$. Hence, on any connected region of threshold space
where~\eqref{eq:stagewise_dominance} holds for all stages, any local
coordinate increase in an active threshold increases both expected quality
and expected cost.
\end{proposition}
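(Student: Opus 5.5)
The plan is to reuse the partial derivatives \eqref{eq:grad_U} and \eqref{eq:grad_C} from the proof of Theorem~\ref{thm:foc}, valid under the same regularity conditions, and sign them separately. Fix a stage $i$ and hold all other thresholds fixed. The first step is to rewrite each derivative as a positive common factor times a boundary conditional expectation. Using the disintegration in the footnote to that proof, set
\[
\rho_i(\boldsymbol{\tau}) := \Pr\bigl(\mathbf{s}_{<i}(x) < \boldsymbol{\tau}_{<i}\bigr)\, f_{s_i \mid \mathbf{s}_{<i} < \boldsymbol{\tau}_{<i}}(\tau_i).
\]
Then the two derivatives become
\[
\frac{\partial \mathbb{E}[U]}{\partial \tau_i} = \rho_i\, \mathbb{E}\bigl[V_{i+1} - m_i \,\big|\, \mathbf{s}_{<i} < \boldsymbol{\tau}_{<i},\, s_i = \tau_i\bigr]
\]
and
\[
\frac{\partial \mathbb{E}[C]}{\partial \tau_i} = \rho_i\, \mathbb{E}\bigl[W_{i+1} \,\big|\, \mathbf{s}_{<i} < \boldsymbol{\tau}_{<i},\, s_i = \tau_i\bigr].
\]
The regularity conditions (continuous, positive densities at the boundary, with the prefix-reaching event of positive probability) give $\rho_i > 0$. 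This is exactly what the active-threshold hypothesis is meant to guarantee: the boundary $s_i = \tau_i$ is actually reached with positive density.

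\emph{Quality.} By \eqref{eq:stagewise_dominance}, the conditional expectation in the first display is strictly positive. Hence $\partial \mathbb{E}[U]/\partial \tau_i > 0$.

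\emph{Cost.} By \eqref{eq:W_def}, $W_{i+1}$ is the conditional expectation of a sum of downstream costs that always contains the term $C_{i+1}(x)$, because $I^{>i} \ge i+1$. So $W_{i+1} \ge \mathbb{E}[C_{i+1} \mid \mathbf{s}_{1:i}]$. I will need a mild assumption that per-query costs are nonnegative with positive conditional mean. This is natural for API costs, and I would state it explicitly if the appendix regularity does not already include it; the fact $c_{i+1} > c_i$ suggests positivity is intended. Under it, the boundary conditional expectation of $W_{i+1}$ is strictly positive, so $\partial \mathbb{E}[C]/\partial \tau_i > 0$.

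Strictly positive derivatives at $\boldsymbol{\tau}$, together with the continuity assumed in the footnote on conditional expectations, keep both derivatives positive in a neighborhood. A small increase of $\tau_i$ alone therefore strictly increases both expected quality and expected cost. For the ``hence'' clause, on a connected region where \eqref{eq:stagewise_dominance} holds for every stage, this pointwise argument applies at every point. Any local coordinate increase in an active threshold therefore raises both objectives; integrating the positive partial derivative along the coordinate segment, which stays in the region for small moves, gives the conclusion.

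The main obstacle is the cost side and the meaning of ``active.'' Strict positivity of the cost derivative fails if the boundary density vanishes or if downstream costs can have zero conditional mean. So the step needing care is stating precisely which regularity condition gives $\rho_i > 0$ and why $\mathbb{E}[W_{i+1} \mid \cdot] > 0$. I would first try to derive both from the appendix assumptions plus $C_j \ge 0$ and $c_j > 0$. Failing that, I would add them as explicit hypotheses defining an active stage.
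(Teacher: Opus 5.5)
Your proposal is correct and takes essentially the same approach as the paper's sketch: sign the Leibniz derivatives \eqref{eq:grad_U}--\eqref{eq:grad_C} at the stage-$i$ boundary, with the dominance hypothesis giving the sign of the quality derivative. Your factoring out of the common positive boundary factor, and your explicit assumption that per-query costs are nonnegative with positive conditional mean (so that the boundary expectation of $W_{i+1}$ is strictly positive), spell out what the paper's one-line argument for the cost side leaves implicit.
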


\begin{proof}[Proof sketch]
Condition~\eqref{eq:stagewise_dominance} is the quality-gain term in
the FOC~\eqref{eq:foc}: at the stage-$i$ boundary, continuing the cascade
has strictly higher expected quality than stopping at model $i$. By the
gradient expressions~\eqref{eq:grad_U}--\eqref{eq:grad_C}, both expected
quality and expected cost therefore increase locally with $\tau_i$.
\end{proof}

\section{Specialization to Two-Model Cascades}
\label{sec:twomodel}

The two-model cascade is the basic building block of the pairwise
envelope and admits a sharper characterization than the general case.
When $k = 2$, write $\mathcal{M}_L := \mathcal{M}_1$,
$\mathcal{M}_H := \mathcal{M}_2$, $s_L := s_1$, $c_L := c_1$,
$c_H := c_2$, and $\tau := \tau_1$. Escalating past the cheap model
deterministically invokes the expensive model, so the continuation values
simplify to $V_2(s;\tau) = \mathbb{E}[U_H(x) \mid s_L(x) = s]$ and
$W_2(s;\tau) = \mathbb{E}[C_H(x) \mid s_L(x) = s]$. Define:
\begin{equation}
    m_L(s) := \mathbb{E}[U_L(x) \mid s_L(x) = s], \qquad m_H(s) := \mathbb{E}[U_H(x) \mid s_L(x) = s].
    \label{eq:mLmH}
\end{equation}
The stagewise integral form~\eqref{eq:integral_quality}--\eqref{eq:integral_cost} reduces to a pair of one-dimensional integrals over the confidence support $\mathcal{T} = \{\tau \in (0,1) : f_{s_L}(\tau) > 0\}$:
\begin{align}
    \mathbb{E}[C(x;\tau)] &= c_L + \int_0^\tau \mathbb{E}[C_H(x) \mid s_L(x) = s]\, f_{s_L}(s)\, ds, \label{eq:2model_cost} \\
    \mathbb{E}[U(x;\tau)] &= \int_0^\tau m_H(s)\, f_{s_L}(s)\, ds + \int_\tau^1 m_L(s)\, f_{s_L}(s)\, ds. \label{eq:2model_quality}
\end{align}
If expected escalation cost is score-independent,
$\mathbb{E}[C_H(x) \mid s_L(x) = s] = c_H$ for all
$s \in \mathcal{T}$, specializing Theorem~\ref{thm:foc} yields the
two-model first-order condition
\begin{equation}
m_H(\tau) - m_L(\tau) = \lambda\, c_H,
\label{eq:foc_2model}
\end{equation}
and the stagewise dominance condition~\eqref{eq:stagewise_dominance} reduces to the standard expected-dominance condition $m_H(\tau) - m_L(\tau) > 0$ for $\tau \in \mathcal{T}^\circ$.

\begin{definition}[Decreasing-Benefit Region]
\label{def:concavity_region}
An interval $I \subseteq \mathcal{T}^\circ$ is a \emph{decreasing-benefit region} if the escalation benefit is weakly decreasing on $I$:
\begin{equation}
    m_H(s') - m_L(s') \leq m_H(s'') - m_L(s'')
    \quad \text{for all } s',s'' \in I \text{ with } s' > s''.
    \label{eq:concavity_cond}
\end{equation}
\end{definition}

\noindent This condition is an ordinal informativeness requirement on
$s_L$: lower confidence scores must identify queries for which escalation
has weakly larger expected benefit. The support $\mathcal{T}^\circ$ need
not be a single decreasing-benefit region.\footnote{Section~\ref{sec:experiments}
evaluates the condition empirically on representative envelope pairs. Outside any decreasing-benefit region the
frontier is locally non-concave; see Appendix~\ref{app:randomized} for a
discussion of randomized threshold policies in this regime.}

\begin{proposition}[Piecewise Concavity and Reciprocal Shadow Prices]
\label{prop:duality}
Suppose expected escalation cost is score-independent,
$\mathbb{E}[C_H(x) \mid s_L(x) = s] = c_H$ for all
$s \in I$, and let $I \subseteq \mathcal{T}^\circ$ be a
decreasing-benefit region. Then the Pareto frontier $U^\dagger$ is
concave on the cost interval $\{\mathbb{E}[C(x;\tau)] : \tau \in I\}$.
If the optimum of \textbf{(P1)} or \textbf{(P2)} for a given target lies
in the interior of $I$, it is characterized by~\eqref{eq:foc_2model}, and
the optimal multipliers at $\tau^* \in I^\circ$ are reciprocals:
\begin{equation}
    \lambda^*_{P1} = \frac{c_H}{m_H(\tau^*) - m_L(\tau^*)}, \qquad \lambda^*_{P2} = \frac{m_H(\tau^*) - m_L(\tau^*)}{c_H} = \frac{1}{\lambda^*_{P1}}.
    \label{eq:dual_multipliers}
\end{equation}
The P1 multiplier $\lambda^*_{P1}$ has units of cost per unit quality and equals the local marginal cost of tightening the quality constraint by one unit. The P2 multiplier $\lambda^*_{P2}$ has units of quality per unit cost and equals the local marginal quality obtainable per additional unit of budget. If $\mathcal{T}^\circ$ is itself a single decreasing-benefit region, $U^\dagger$ is globally concave, the Lagrangian relaxations of \textbf{(P1)} and \textbf{(P2)} have zero duality gap, and the local marginals are global shadow prices.
\end{proposition}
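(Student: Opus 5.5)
The approach is to parametrize the frontier by the threshold and compute its slope. On $I$, let $g(\tau) := \mathbb{E}[C(x;\tau)] = c_L + c_H F_{s_L}(\tau)$ and $h(\tau) := \mathbb{E}[U(x;\tau)]$, using \eqref{eq:2model_cost}--\eqref{eq:2model_quality} and score-independent escalation cost. Since $f_{s_L}>0$ on $\mathcal{T}^\circ$, $g$ is strictly increasing and continuous on $I$. It therefore has a continuous inverse $\tau(B)$ on the cost interval $g(I)$. We then study the curve $\phi(B) := h(\tau(B))$.

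By Leibniz on \eqref{eq:2model_quality}, $h'(\tau) = [m_H(\tau)-m_L(\tau)] f_{s_L}(\tau)$ and $g'(\tau) = c_H f_{s_L}(\tau)$. Hence
\[
\phi'(B) = \frac{m_H(\tau(B)) - m_L(\tau(B))}{c_H}.
\]
As $B$ increases, $\tau(B)$ increases. By \eqref{eq:concavity_cond} the escalation benefit is weakly decreasing in $\tau$, so $\phi'$ is weakly decreasing in $B$. Thus $\phi$ is concave on $g(I)$. If $m_H - m_L$ is only continuous rather than differentiable, it is cleaner to avoid second derivatives. Instead, write $\phi(B) = \phi(B_0) + \int_{B_0}^{B} \psi(b)\,db$ with $\psi$ monotone nonincreasing, and conclude concavity from that form.

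The main obstacle is passing from $\phi$, the single-threshold curve restricted to $I$, to the value function $U^\dagger$. In the two-model case the feasible set of \textbf{(P2)} at budget $B$ is $\{\tau : g(\tau)\le B\} = [0,\tau(B)]$, because $g$ is monotone. So $U^\dagger(B) = \sup_{\tau\le\tau(B)} h(\tau)$. Where the benefit is positive, $h$ is increasing and $U^\dagger = \phi$. Where the benefit is negative, the decreasing-benefit property shows it stays negative for all larger $\tau$ in $I$. In that case $U^\dagger$ becomes constant at the peak, and a concave increasing piece followed by a flat piece is still concave. I would state this carefully, restricting to costs in $g(I)$. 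The sup over $\tau$ below $I$ is handled by noting that the statement concerns concavity only on that cost interval. If thresholds outside $I$ achieve more, I would say that the claim is about the curve traced by $I$; this matches the sharper ``piecewise'' reading of the proposition.

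For the multipliers, at an interior optimum $\tau^*\in I^\circ$ Theorem~\ref{thm:foc} specializes to \eqref{eq:foc_2model}, which gives $\lambda^*_{P2} = (m_H-m_L)(\tau^*)/c_H$. For \textbf{(P1)} with Lagrangian $\mathbb{E}[C] - \mu(\mathbb{E}[U]-Q)$, stationarity gives $c_H f = \mu (m_H-m_L) f$. Hence $\mu = c_H/(m_H-m_L)(\tau^*)$, which is the reciprocal. The marginal interpretations follow from the envelope theorem: $dU^\dagger/dB = \phi'(B) = \lambda^*_{P2}$ from the formula above. Inverting the monotone map gives $dC^\dagger/dQ = 1/\phi' = \lambda^*_{P1}$.

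Finally, if $\mathcal{T}^\circ$ is a single decreasing-benefit region, the argument applies on the whole cost range, so $U^\dagger$ is globally concave. Concavity of the perturbation function (value as a function of the budget) is exactly the condition for zero duality gap in one-constraint problems. The supporting line at $B$ has slope $\lambda^*_{P2}$, so the Lagrangian relaxation attains $U^\dagger(B)$. Symmetrically, the convex cost-of-quality function gives zero gap for \textbf{(P1)}, so the local marginals are global shadow prices.
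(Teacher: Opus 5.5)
Your proposal is correct and follows the paper's proof. Concavity comes from the slope $(m_H-m_L)/c_H$ of the threshold-parametrized curve being nonincreasing in cost, the reciprocal multipliers come from the two stationarity conditions (the paper phrases this via the envelope theorem on the two value functions), and zero duality gap comes from global concavity of $U^\dagger$, where your supporting-line argument is the unpacked version of the paper's appeal to Slater's condition.

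Your running-max step is more careful than the paper, which simply identifies $U^\dagger$ with the curve $\tau\mapsto(\mathbb{E}[C(x;\tau)],\mathbb{E}[U(x;\tau)])$. However, your sentence claiming that thresholds below $I$ are handled by restricting to the cost interval is wrong. Those thresholds stay feasible, so on that interval $U^\dagger(B)=\max\{M_0,\sup_{\tau\in I,\,\tau\le\tau(B)}h(\tau)\}$ with $M_0=\sup_{\tau\le\inf I}h(\tau)$. This has a convex kink whenever $M_0>h(\inf I)$ and $\phi$ later exceeds $M_0$, so the literal claim can fail. Your fallback is therefore the statement actually proved, by you and by the paper: concavity of the curve traced by $I$, or of $U^\dagger$ when $M_0\le h(\inf I)$, which is automatic for $I=\mathcal{T}^\circ$ because $f_{s_L}=0$ below $\inf\mathcal{T}$.
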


\begin{proof}[Proof sketch]
Under score-independent expected escalation cost on $I$,
$d\mathbb{E}[C(x;\tau)]/d\tau=c_H f_{s_L}(\tau)$ and
$d\mathbb{E}[U(x;\tau)]/d\tau=(m_H(\tau)-m_L(\tau))f_{s_L}(\tau)$.
Thus, at any interior $\tau \in I$ with $f_{s_L}(\tau)>0$, the frontier
slope is $dU^\dagger/dC=(m_H(\tau)-m_L(\tau))/c_H$. On a
decreasing-benefit region this slope is non-increasing in $\tau$; since
cost is strictly increasing in $\tau$, the slope is non-increasing in cost,
so $U^\dagger$ is concave on the corresponding cost interval. The reciprocal
relationship follows from the envelope theorem. Full proof in
Appendix~\ref{app:proof_piecewise}.
\end{proof}

The score-independence assumption isolates the economic geometry:
thresholds change the probability and composition of escalation without
systematically changing the expected downstream cost conditional. Appendices~\ref{app:variable_cost} and~\ref{app:cost-variability}
discuss and empirically check this condition.

\section{The Pairwise Envelope Over a Model Pool}
\label{sec:envelope}

Given a pool of $k$ models, a practitioner can deploy any two-model
cascade formed from a pair $(i,j)$ with $i < j$. This section
characterizes the frontier achievable over all such two-model cascades
and identifies the budget levels at which the optimal pair transitions.
Let $\mathcal{Q} = \{(i,j) : 1 \leq i < j \leq k\}$ denote the set of $\binom{k}{2}$ cost-ordered pairs, and for each $(i,j) \in \mathcal{Q}$ let $U^{\dagger,(i,j)}$ denote the Pareto frontier~\eqref{eq:pareto_frontier} of the two-model cascade with $\mathcal{M}_L = \mathcal{M}_i$ and $\mathcal{M}_H = \mathcal{M}_j$. By non-dominance of the pool, each pair satisfies $c_i < c_j$ and $\mathbb{E}[U_j] > \mathbb{E}[U_i]$, so every pair defines a non-trivial two-model cascade. The \emph{pairwise envelope} is the pointwise supremum of per-pair Pareto frontiers:
\begin{equation}
    U^*(B)
    :=
    \sup_{\substack{(i,j)\in\mathcal{Q}:\\ B\in[c_i,\,c_i+c_j]}}
    U^{\dagger,(i,j)}(B),
    \qquad B\in[c_1,c_k].
    \label{eq:frontier}
\end{equation}
Each pairwise frontier $U^{\dagger,(i,j)}$ has domain $[c_i,\, c_i + c_j]$.
We restrict attention to budgets $B \leq c_k$, i.e. budgets no higher than
the most expensive standalone model.\footnote{The pointwise supremum of locally concave pairwise frontiers need not
itself be globally concave; switches between pairs can introduce
non-concavities. If randomized mixtures over cascade policies are
allowed, the achievable set is convexified and the relevant frontier
becomes the concave envelope of these deterministic frontiers.}

\begin{remark}[Switching points]
\label{rem:switching}
Since $U^*$ is the pointwise supremum of the piecewise smooth per-pair 
frontiers $\{U^{\dagger,(i,j)}: (i,j) \in \mathcal{Q}\}$, it is smooth on each region where a 
single pair $(i,j)$ attains the supremum and has a corner at the 
\emph{switching points} where the optimal pair changes. Generically, 
at a switching point $B^*$ the left- and right-derivatives
of $U^*$ differ, so the shadow price of quality jumps discontinuously;
tangential intersections are non-generic but possible.
Practically, these are the budget levels at which the optimal cascade
recipe changes.
\end{remark}

\section{Experiments}
\label{sec:experiments}

\subsection{Experimental Setup}
\label{sec:exp-setup}

We evaluate eight models from five providers: Llama 3.1-8B, Qwen2.5-7B,
GPT-4o mini, GPT-oss-20B, DeepSeek-V3, Llama 3.3-70B, GPT-4o, and
MiniMax-M2.7. Benchmarks cover MATH levels 3--5 \citep{hendrycks2021measuringmathematicalproblemsolving}, MMLU \citep{hendrycks2021measuringmassivemultitasklanguage}, TriviaQA \citep{joshi2017triviaqalargescaledistantly}, SimpleQA \citep{wei2024measuringshortformfactualitylarge}, and LiveCodeBench \citep{jain2024livecodebenchholisticcontaminationfree}; grading details are in Appendix~\ref{app:grading}. 
Single-model operating points are reported in Table~\ref{tab:model-descriptives}: 
Llama 3.1-8B is the lowest-cost model throughout, while the highest-accuracy 
model varies by dataset. Descriptive pairwise frontiers are 
in Appendix~\ref{app:descriptive-frontiers}.
Confidence scores are white-box log-probability signals \citep{bouchard2025uqlm}; main-text cascades use mean token negentropy, with alternative signals and a learned scorer compared in Appendix~\ref{app:voi}.

\begin{table}[t]
\centering
\footnotesize
\caption{Single-model frontier endpoints. Cost is mean dollars per query multiplied by $10^6$; accuracy is the dataset-specific correctness metric.}
\label{tab:model-descriptives}
\begin{tabular}{l l rr l rr}
\toprule
Dataset & Lowest-cost model & Acc. & Cost & Highest-accuracy model & Acc. & Cost \\
\midrule
MMLU & Llama 3.1-8B & 0.554 & 12.4 & GPT-oss-20B & 0.843 & 79.9 \\
TriviaQA & Llama 3.1-8B & 0.579 & 4.4 & Llama 3.3-70B & 0.830 & 40.2 \\
MATH & Llama 3.1-8B & 0.145 & 60.1 & DeepSeek-V3 & 0.818 & 1398.7 \\
SimpleQA & Llama 3.1-8B & 0.044 & 7.0 & GPT-4o & 0.382 & 356.4 \\
LiveCodeBench & Llama 3.1-8B & 0.179 & 70.8 & DeepSeek-V3 & 0.645 & 1512.1 \\
\bottomrule
\end{tabular}

\end{table}

We compare the pairwise envelope against the highest-accuracy single-model baseline, a full fixed cost-ordered cascade chain, a FrugalGPT-style cascade that selects a cost-ordered model sequence and thresholds \citep{chen2023frugalgptuselargelanguage}, and a diagnostic frozen-embedding router, a lightweight version of the learned routing approach in RouteLLM \citep{ong2025routellmlearningroutellms} (see Appendix~\ref{app:learned-router}). The envelope is built from deterministic two-model uncertainty-threshold cascades, closely related to token-level uncertainty cascades \citep{gupta2024languagemodelcascadestokenlevel}; candidates are restricted to calibration-admissible model pairs and evaluated under the same cost-quality criterion.\footnote{Admissible edges, model pools, model sequences, thresholds, and router classifiers are selected on calibration data only, before held-out evaluation. An escalation edge is included only when the downstream model is more costly and more accurate on the calibration split. The router is a diagnostic baseline, not a state-of-the-art learned-routing claim. Monetary cost is not incurred by the open-source sentence-transformer embedding used by the router; the reported costs count LLM token costs.} Per-query cost uses actual token counts and the token prices in Appendix~\ref{app:cost-variability}, which also checks the cost--score independence approximation. We report test-set median curves with pointwise 10th--90th percentile bands over 50 random 50/50 calibration-test splits.

\subsection{Held-Out Frontier Comparisons}
\label{sec:exp-frontier}
 
\begin{figure}[t]
\centering
\includegraphics[width=\linewidth]{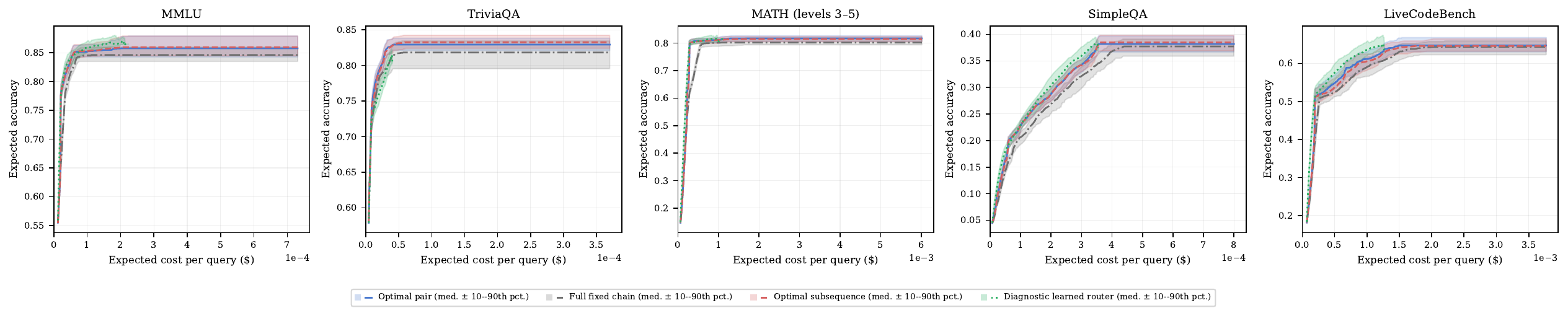}
\caption{Optimal pair, full fixed chain, optimal subsequence, and diagnostic learned router across five datasets. Curves are medians across 50 random 50/50 splits; shaded bands are 10th--90th percentiles. Thresholds and model sequences are fit on calibration and evaluated held out.}
\label{fig:kmodel_vs_envelope}
\end{figure}

\begin{table}[t]
\centering
\tiny
\caption{Method comparison across five datasets. Gain = normalized area above the random-escalation baseline connecting the lowest-cost and highest-accuracy single-model endpoints, computed on the median held-out frontier. CR\,\% = cost reduction at 90\% of ceiling quality vs.\ the highest-accuracy single model. Computed over 50 random 50/50 calibration-test splits.}
\label{tab:summary}
\begin{tabular}{l rr rr rr rr rr}
\toprule
& \multicolumn{2}{c}{MMLU} & \multicolumn{2}{c}{TriviaQA} & \multicolumn{2}{c}{MATH (levels 3–5)} & \multicolumn{2}{c}{SimpleQA} & \multicolumn{2}{c}{LiveCodeBench} \\
\cmidrule(lr){2-3} \cmidrule(lr){4-5} \cmidrule(lr){6-7} \cmidrule(lr){8-9} \cmidrule(lr){10-11}
Method & Gain & CR\,\% & Gain & CR\,\% & Gain & CR\,\% & Gain & CR\,\% & Gain & CR\,\% \\
\midrule
Always-expensive & --- & 0.0 & --- & 0.0 & --- & 0.0 & --- & 0.0 & --- & 0.0 \\
Optimal pair & 0.360 & \textbf{73.7} & \textbf{0.316} & \textbf{74.5} & 0.393 & 79.5 & 0.158 & 15.1 & 0.329 & 56.1 \\
Full fixed chain & 0.259 & 59.3 & 0.249 & 67.2 & 0.344 & 66.7 & 0.093 & 1.3 & 0.277 & 41.4 \\
Optimal subsequence & 0.360 & \textbf{73.7} & 0.306 & \textbf{74.5} & 0.390 & 79.5 & 0.156 & 14.2 & 0.315 & 51.2 \\
Diagnostic learned router & \textbf{0.393} & \textbf{73.7} & 0.219 & 59.9 & \textbf{0.394} & \textbf{81.2} & \textbf{0.193} & \textbf{26.7} & \textbf{0.365} & \textbf{64.9} \\
\bottomrule
\end{tabular}
\end{table}

Figure~\ref{fig:kmodel_vs_envelope} compares three deterministic threshold-cascade policy classes. 
The pairwise envelope selects the best two-model cascade at each budget. 
The full fixed chain uses the calibration-selected non-dominated pool in cost order and optimizes thresholds only, matching the fixed-chain object in the theory. 
The optimal subsequence baseline is broader: it jointly selects a cost-ordered subsequence and thresholds on calibration data. 
Table~\ref{tab:summary} summarizes these frontiers using two deployment-oriented metrics: normalized gain over random escalation between the cheapest and highest-accuracy models, which measures area above the no-signal baseline across budgets, 
and cost reduction at 90\% of ceiling quality relative to always using the highest-accuracy model. 
On both metrics, the pairwise envelope is competitive with optimized subsequence cascades across all datasets: normalized gain differs by at most $0.014$, and CR@90 is identical on MMLU, TriviaQA, and MATH, within one point on SimpleQA, and higher for the envelope on LiveCodeBench.
By contrast, the full fixed chain has lower normalized gain and lower CR@90 in every dataset.
Thus, within the deterministic threshold-cascade class studied here, additional mandatory stages can hurt, while optional intermediate stages add little beyond the best pairwise policy. 
The practical implication is that, for the evaluated model pools and tasks, a practitioner can sweep one threshold per calibration-valid pair, take the envelope, and deploy only the selected pair and threshold for a chosen budget, 
obtaining performance comparable to joint subsequence optimization while avoiding a higher-dimensional search. 
Appendices~\ref{app:foc-verification}, \ref{app:cal-sensitivity}, \ref{app:grid-sensitivity}, and~\ref{app:opt-sensitivity} verify the two-model sweep against k=2 NSGA-II and show stability across calibration size, grid resolution, and optimizer choice.

\textbf{Diagnostic learned router comparison.} The embedding router has higher normalized gain than the cascade envelope on 4/5 datasets, with the largest gains on SimpleQA and LiveCodeBench (Table~\ref{tab:summary}). 
A same-signal comparison shows that this advantage is primarily structural: pre-generation dispatch avoids paying the cheap model's cost $c_L$ on queries sent elsewhere, while an embedding cascade using the same signal remains below the UQ cascade on 4/5 datasets. 
TriviaQA is the exception, where query embeddings are near-uninformative (AUROC $\approx 0.49$). 
This diagnostic comparison is intended to separate structural costs from signal quality rather than to benchmark the strongest possible learned router.

\subsection{Structural Conditions}
\label{sec:exp-structure}

\begin{figure}[t]
\centering
\includegraphics[width=\linewidth]{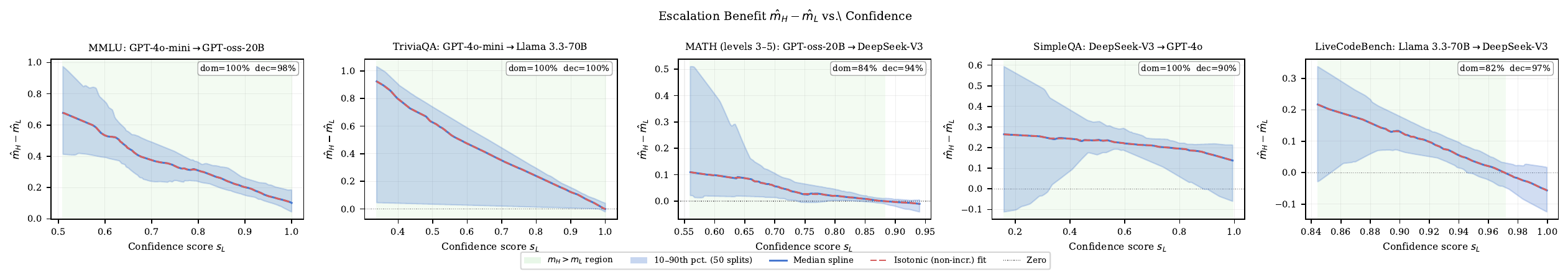}
\caption{Escalation benefit $m_H(s) - m_L(s)$ as a function of the cheap model's confidence score $s_L$, for one representative pair per dataset (the pair dominating the largest cost range on the envelope). Solid curves are medians across 50 random 50/50 splits; shaded bands are 10th--90th percentiles. The dotted line marks zero. Annotations report the expected-dominance fraction (dom) and decreasing-benefit fraction (dec) of the median curve.}
\label{fig:escalation_benefit}
\end{figure}

Figure~\ref{fig:escalation_benefit} tests the two structural conditions 
underlying Section~\ref{sec:twomodel} on the representative envelope pair 
per dataset. \textbf{Expected dominance} (Proposition~\ref{prop:monotone}) 
holds on 100\% of the score support for MMLU, TriviaQA, and SimpleQA. 
MATH and LiveCodeBench show benefit reversal only in the highest-confidence 
regions, covering 16--19\% of the plotted support, where the cheap model 
is already sufficiently reliable and escalation can add negligible or 
negative expected benefit. \textbf{Decreasing benefit} 
(Definition~\ref{def:concavity_region}) holds on 90--100\% of the support: 
it holds on all plotted support for TriviaQA, at least 97\% for MMLU and 
LiveCodeBench, 94\% for MATH, and 90\% for SimpleQA. Because 
empirical costs vary by query and need not be score-independent, 
these diagnostics support but do not by themselves prove global 
concavity of the realized token-cost frontier; full pairwise 
escalation-benefit curves are in Appendix~\ref{app:segments}.

\section{Conclusion}
\label{sec:conclusion}

We developed a decision-theoretic framework for LLM cascades that
characterizes cost-quality tradeoffs via piecewise concavity,
reciprocal shadow prices, and first-order conditions equalizing
marginal quality-per-cost across stage boundaries. For a pool of $k$ models,
we characterize the frontier achievable by deterministic two-model
threshold cascades as the pointwise envelope over $\binom{k}{2}$
pairwise frontiers.
Empirically, within the deterministic threshold-cascade class studied here, optimized
subsequence cascades do not deliver practically meaningful held-out gains
over the pairwise envelope, while full fixed chains underperform it across
five benchmarks, eight models, and five
providers. A diagnostic learned
$k$-model router that dispatches pre-generation, however, exceeds the
best cascade policy on four of five datasets, including three where its
embedding signal is weaker than the cascade's white-box uncertainty signal,
because it avoids paying the cheap model's generation cost on queries
routed elsewhere. The exception is TriviaQA, where query embeddings are
uninformative and post-generation confidence remains the only viable
signal. Together, these results suggest a narrower practical diagnostic:
when inexpensive pre-generation features contain usable difficulty
information, even simple routing can expose the structural cost paid by
post-generation cascades; when such features are uninformative,
confidence-based cascading remains competitive. This structural-cost
conclusion is not an exhaustive claim about all learned routing systems,
including richer routers or route-then-cascade hybrids. Extending the
theoretical framework to jointly characterize routing and cascading under
a common cost-quality formulation is a natural next step. The empirical
scope is also limited to the evaluated model pool, short-form and code correctness
benchmarks, and monetary token-cost objectives; dedicated reasoning
models, long-form generation, and latency-aware objectives may change the
relative value of intermediate stages.

\bibliographystyle{plainnat}
\bibliography{tmlr}

\clearpage
\appendix

\section{Supplementary Theoretical Analysis}
\label{app:supplementary_theory}

\subsection{Regularity conditions for Theorem~\ref{thm:foc}.}
We assume the joint density $f_{\mathbf{s}}$ is continuous and strictly
positive on the interior of its support, and that $m_i$, $V_{i+1}$, and
$W_{i+1}$ are continuous in $(s_{1:i}, \boldsymbol{\tau})$. The proof
follows the sketch in Section~\ref{sec:foc}; boundary optima require
replacing stationarity with one-sided inequalities.

\subsection{Score-Dependent Per-Query Cost}
\label{app:variable_cost}

Proposition~\ref{prop:duality} does not require constant per-query costs.
It requires the weaker score-independence condition
$\mathbb{E}[C_H(x) \mid s_L(x)=s]=c_H$ on the relevant confidence region,
which makes expected cost linear in the escalation probability. If expected
token cost varies systematically with the cheap model's confidence score,
then $\mathbb{E}[C_H(x) \mid s_L(x) < \tau]$ can depend on $\tau$ and the
realized cost curve need not be linear in $F_{s_L}(\tau)$. In that case,
decreasing escalation benefit alone no longer guarantees concavity of the
realized token-cost frontier, though Theorem~\ref{thm:foc} and
Proposition~\ref{prop:monotone} still apply. Appendix~\ref{app:cost-variability}
reports the corresponding cost--score correlation diagnostic.

\subsection{Complete First-Order Conditions for the Constrained Problems}
\label{app:p1_fonc}

Section~\ref{sec:foc} states the stationarity component of the FONC for
the budget-constrained problem \textbf{(P2)}. For completeness, the KKT
conditions for an interior optimum of \textbf{(P2)} are
\begin{align}
    &\mathbb{E}[C(x;\boldsymbol{\tau})] \leq B,
    &&\text{(primal feasibility)} \label{eq:kkt_p2_primal}\\
    &\lambda \geq 0,
    &&\text{(dual feasibility)} \label{eq:kkt_p2_dual}\\
    &\lambda\left(\mathbb{E}[C(x;\boldsymbol{\tau})]-B\right)=0,
    &&\text{(complementary slackness)} \label{eq:kkt_p2_cs}
\end{align}
and, for each active threshold,
\begin{equation}
\begin{aligned}
\mathbb{E}\Bigl[
    V_{i+1}(\mathbf{s}_{1:i};\boldsymbol{\tau}) - m_i(\mathbf{s}_{1:i})
    \;\Big|\;
    \mathbf{s}_{<i} < \boldsymbol{\tau}_{<i},\; s_i = \tau_i
    \Bigr]
    =
    \lambda\,
    \mathbb{E}\Bigl[
    W_{i+1}(\mathbf{s}_{1:i};\boldsymbol{\tau})
    \;\Big|\;
    \mathbf{s}_{<i} < \boldsymbol{\tau}_{<i},\; s_i = \tau_i
    \Bigr].
\end{aligned}
\label{eq:kkt_p2_stationarity}
\end{equation}
Boundary optima replace stationarity with the appropriate one-sided
inequalities.

The quality-constrained problem \textbf{(P1)} uses the same marginal
objects with the reciprocal shadow price. Write its Lagrangian as
\[
    \mathcal{L}_{P1}(\boldsymbol{\tau},\mu)
    =
    \mathbb{E}[C(x;\boldsymbol{\tau})]
    + \mu\left(Q-\mathbb{E}[U(x;\boldsymbol{\tau})]\right),
\]
where $\mu \geq 0$ has units of cost per unit quality. Its KKT conditions
are
\begin{align}
    &\mathbb{E}[U(x;\boldsymbol{\tau})] \geq Q,
    &&\text{(primal feasibility)} \label{eq:kkt_p1_primal}\\
    &\mu \geq 0,
    &&\text{(dual feasibility)} \label{eq:kkt_p1_dual}\\
    &\mu\left(Q-\mathbb{E}[U(x;\boldsymbol{\tau})]\right)=0,
    &&\text{(complementary slackness)} \label{eq:kkt_p1_cs}
\end{align}
and, for each active threshold,
\begin{equation}
\begin{aligned}
\mathbb{E}\Bigl[
    W_{i+1}(\mathbf{s}_{1:i};\boldsymbol{\tau})
    \;\Big|\;
    \mathbf{s}_{<i} < \boldsymbol{\tau}_{<i},\; s_i = \tau_i
    \Bigr]
    =
    \mu\,
    \mathbb{E}\Bigl[
    V_{i+1}(\mathbf{s}_{1:i};\boldsymbol{\tau}) - m_i(\mathbf{s}_{1:i})
    \;\Big|\;
    \mathbf{s}_{<i} < \boldsymbol{\tau}_{<i},\; s_i = \tau_i
    \Bigr].
\end{aligned}
\label{eq:foc_p1}
\end{equation}
Thus the marginal cost-per-quality ratio is equalized across active stage
boundaries. Whenever the same interior frontier point solves
both constrained formulations, comparison with~\eqref{eq:foc} gives
$\mu = 1/\lambda$. In the two-model score-independent case,
\eqref{eq:foc_p1} reduces to
\begin{equation}
    c_H = \mu\,(m_H(\tau)-m_L(\tau)),
\end{equation}
which is the P1 counterpart to~\eqref{eq:foc_2model}.

\subsection{Proof of Proposition~\ref{prop:duality} (Piecewise Concavity and Reciprocal Shadow Prices)}
\label{app:proof_piecewise}

\begin{proof}
Fix a decreasing-benefit region $I$. Under score-independent expected
escalation cost on $I$, differentiating the two-model integral expressions gives
\[
\frac{d}{d\tau}\mathbb{E}[U(x;\tau)]
=
\bigl(m_H(\tau)-m_L(\tau)\bigr) f_{s_L}(\tau),
\qquad
\frac{d}{d\tau}\mathbb{E}[C(x;\tau)]
=
c_H f_{s_L}(\tau).
\]
The slope of the cost-quality frontier at an interior point parameterized
by $\tau$ is therefore
\[
\frac{d\mathbb{E}[U(x;\tau)]}{d\mathbb{E}[C(x;\tau)]}
=
\frac{m_H(\tau)-m_L(\tau)}{c_H}.
\]
On a decreasing-benefit region this slope is non-increasing in $\tau$ and,
because expected cost is increasing in $\tau$, non-increasing in cost.
Hence $U^\dagger$ restricted to the corresponding cost interval is concave.

For a target whose unrestricted optimum $\tau^*$ lies in $\mathrm{int}(I)$, stationarity of the Lagrangian on $I$ is necessary and sufficient. The envelope theorem applied to the value functions $V_{P1}(Q) = \min_\tau \mathbb{E}[C(x;\tau)]$ s.t.\ $\mathbb{E}[U(x;\tau)] \geq Q$ and $V_{P2}(B) = U^\dagger(B)$ yields $\lambda^*_{P1} = V_{P1}'(Q)$ and $\lambda^*_{P2} = U^{\dagger\prime}(B)$. Since $V_{P1}'(Q) = c_H / (m_H(\tau^*) - m_L(\tau^*))$ and $U^{\dagger\prime}(B) = (m_H(\tau^*) - m_L(\tau^*))/c_H$, these are reciprocals.

When $I = \mathcal{T}^\circ$, the value function $U^\dagger$ is globally
concave. For any non-degenerate budget $B \in (c_L, c_L + c_H)$, Slater's condition
for (P2) follows from strict budget feasibility: integrating
$d\mathbb{E}[C(x;\tau)]/d\tau = c_H f_{s_L}(\tau)$ gives
$\mathbb{E}[C(x;\tau_0)] = c_L + c_H F_{s_L}(\tau_0)$, so any $\tau_0$ with
escalation probability $p_0 = F_{s_L}(\tau_0) < (B - c_L)/c_H$ satisfies
$\mathbb{E}[C(x;\tau_0)] < B$, which exists by continuity of $F_{s_L}$. Standard Lagrangian duality for the resulting concave program
then gives zero duality gap, and the local marginals are global shadow
prices at interior optima. Boundary budgets $B \le c_L$ and
$B \ge c_L+c_H$ reduce to one-sided derivative cases.

If the optimum lies at the boundary of $I$, the stationarity condition is replaced by a one-sided inequality and the reciprocal relationship holds as an inequality between the left- and right-derivatives of the value functions.
\end{proof}

\subsection{Randomized Threshold Policies Outside Decreasing-Benefit Regions}
\label{app:randomized}

Outside any decreasing-benefit region, the map $\tau \mapsto m_H(\tau) - m_L(\tau)$ is locally increasing, so the slope of the Pareto frontier $U^\dagger$ is locally increasing in cost. In this regime, the frontier is locally non-concave and a randomized mixture of two thresholds (i.e., deploying threshold $\tau_a$ with probability $\alpha$ and $\tau_b$ with probability $1 - \alpha$) can achieve a cost-quality pair strictly above the deterministic frontier. Specifically, for any $\tau_a < \tau_b$ in a locally increasing region, the convex combination $(\alpha \cdot \mathbb{E}[C(x;\tau_a)] + (1-\alpha) \cdot \mathbb{E}[C(x;\tau_b)],\; \alpha \cdot \mathbb{E}[U(x;\tau_a)] + (1-\alpha) \cdot \mathbb{E}[U(x;\tau_b)])$ lies above the deterministic curve whenever the curve is locally convex.

However, local non-concavity does not imply that deterministic cascades
are empirically uncompetitive, but rather that randomizing between two
thresholds can convexify a locally convex segment of the deterministic
frontier. Whether the deterministic cascade lies above the single-model
endpoint chord depends on the average escalation benefit among escalated
queries relative to non-escalated queries. In practice, the magnitude of
local non-concavities (reversals in $m_H - m_L$) is small relative to the
overall frontier curvature on the datasets studied in
Section~\ref{sec:experiments}, so deterministic threshold cascades remain
near the relevant empirical frontier.

\section{Additional Empirical Analyses}
\label{app:experiments}

\subsection{Scorer Choice Ablation}
\label{app:voi}

Figure~\ref{fig:voi_scatter} compares alternative UQ signals as cascade
deferral scores. Within each calibration-test split, we first select the
admissible model pairs using calibration data only. We then recompute the
two-model threshold frontier for each scorer and each calibration-valid pair,
summarizing performance by pair-level normalized gain over a no-signal
random-escalation baseline. The plotted point is the median gain across
pair cells, and the vertical bar gives the
10th--90th percentile range across those cells. The averages include
calibration-valid pairs that never attain the envelope, so low pair-average
gains need not reflect the performance of the selected envelope.

\begin{figure}[h]
\centering
\includegraphics[width=\linewidth]{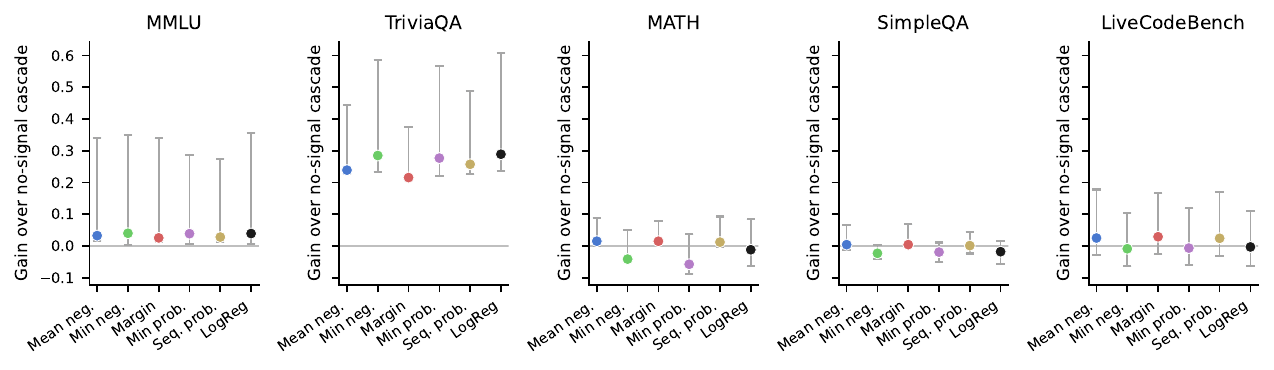}
\caption{Scorer choice ablation. Points show median pair-level normalized gain over a no-signal random-escalation baseline; vertical bars show the 10th--90th percentile across calibration-valid pair cells, including pairs that need not attain the pairwise envelope. Values are computed from 50 calibration-test splits.}
\label{fig:voi_scatter}
\end{figure}

\paragraph{Scorers.} We compute the UQ signals using the UQLM library
\citep{bouchard2025uqlm, bouchard2025uncertainty}. Six confidence signals are compared:
five off-the-shelf log-probability scorers (mean token negentropy, min token
negentropy, probability margin, min probability, sequence probability) and a
learned \texttt{logreg\_ensemble} that fits a logistic regression on calibration
data using all five base scorers as features, predicting cheap-model correctness
$U_L(x)$. For a generated response $y=(t_1,\ldots,t_L)$, let $p_j$ denote the
probability assigned to the generated token $t_j$.

\paragraph{Length-Normalized Sequence Probability (LNSP).} This score uses the geometric mean of the probabilities assigned to the generated tokens, which removes the mechanical penalty faced by longer responses \citep{malinin2021uncertaintyestimationautoregressivestructured}:
\[
\mathrm{LNSP}(y)=\left(\prod_{j=1}^{L}p_j\right)^{1/L}.
\]

\paragraph{Minimum Token Probability (MTP).} This score records the least confident generated token in the response \citep{manakul2023selfcheckgptzeroresourceblackboxhallucination}:
\[
\mathrm{MTP}(y)=\min_{j\in\{1,\ldots,L\}}p_j.
\]

The remaining scorers use the top-$K$ alternatives at each output
position. Let $\{p_{j,1},\ldots,p_{j,K}\}$ be the top-$K$ probabilities
at position $j$, sorted from largest to smallest, and define the
renormalized top-$K$ probabilities
$p_{j,k}^{(K)}=p_{j,k}/\sum_{\ell=1}^{K}p_{j,\ell}$ for
$k=1,\ldots,K$.
In all experiments, we request $K=15$ top log-probabilities when available;
for the Llama models served through the Together AI API, which exposes only
$K=5$ in our setup, the top-$K$ scorers use $K=5$.

\paragraph{Probability Margin (PM).} This margin averages the probability gap between the most likely and second-most-likely token at each position \citep{farr2024redctsystemsdesignmethodology}:
\[
\mathrm{PM}(y)=\frac{1}{L}\sum_{j=1}^{L}(p_{j,1}^{(K)}-p_{j,2}^{(K)}).
\]

\paragraph{Average Token Negentropy (ATN@$K$).} This score averages a normalized negentropy transformation over token positions \citep{scalena2025eagerentropyawaregenerationadaptive,manakul2023selfcheckgptzeroresourceblackboxhallucination}. First define the top-$K$ entropy at position $j$ as
\[
\mathrm{TE@}K(t_j)=-\sum_{k=1}^{K}p_{j,k}^{(K)}\log p_{j,k}^{(K)}.
\]
We convert entropy to a confidence-oriented score in $[0,1]$ by
\[
\mathrm{TN@}K(t_j)=1-\frac{\mathrm{TE@}K(t_j)}{\log K}, \qquad
\mathrm{ATN@}K(y)=\frac{1}{L}\sum_{j=1}^{L}\mathrm{TN@}K(t_j).
\]

\paragraph{Minimum Token Negentropy (MTN@$K$).} This score takes the least confident normalized-negentropy value across the generated positions \citep{scalena2025eagerentropyawaregenerationadaptive,manakul2023selfcheckgptzeroresourceblackboxhallucination}:
\[
\mathrm{MTN@}K(y)=\min_{j\in\{1,\ldots,L\}}\mathrm{TN@}K(t_j).
\]

\paragraph{Gain metric.} For a pair $(L,H)$, the scorer-choice ablation uses a
no-signal random-escalation baseline within the same cascade architecture:
with escalation probability $p$, every query pays the cheap-model cost and a
random fraction $p$ also pays the expensive-model cost, giving
$U_\text{rand}(p)=(1-p)a_L+p a_H$ and
$C_\text{rand}(p)=c_L+p c_H$.

\paragraph{Benefit-AUROC diagnostic.} As a secondary diagnostic, Table~\ref{tab:voi_summary} reports benefit-AUROC in parentheses. For each pair, benefit-AUROC is the AUROC of $-s_L(x)$ for predicting positive realized escalation benefit, $\mathbf{1}\{U_H(x) > U_L(x)\}$; thus larger values mean low confidence better identifies queries on which escalation changes the answer from incorrect to correct.

\paragraph{Findings.} Mean token negentropy is the most stable default: it achieves the highest average gain on MMLU, MATH, and LiveCodeBench, and is close to the best scorer on SimpleQA. TriviaQA is the exception, where the learned ensemble and min-token negentropy perform best. The benefit-AUROC correlations are positive in the pooled data on all five datasets and remain positive after pair demeaning on four of five datasets, but the ranking is not perfect. For example, the learned ensemble often has high benefit-AUROC without the highest gain. This reflects the fact that cascade value depends not only on ranking positive escalation-benefit cases, but also on where thresholds place mass along the score support and on the cost distribution of escalated queries. We therefore use mean token negentropy as the common main-text scorer because it is consistently competitive without dataset-specific scorer selection.

\begin{table}[h]
\centering
\small
\caption{Median pair-level gain (benefit-AUROC in parentheses) per scorer $\times$ dataset, averaged across valid pairs. Gain $<0$ indicates that the cascade underperforms the no-signal random-escalation baseline on average. Bold marks the highest-gain scorer per dataset. The bottom rows report Spearman correlations between benefit-AUROC and gain before and after subtracting pair-specific means across scorers.}
\label{tab:voi_summary}
\begin{tabular}{lccccc}
\toprule
Scorer & MMLU & TriviaQA & MATH & SimpleQA & LiveCodeBench \\
\midrule
Mean negentropy   & \textbf{0.146} (0.68) & 0.303 (0.76) & \textbf{0.035} (0.55) & 0.019 (0.49) & \textbf{0.058} (0.56) \\
Min negentropy    & 0.139 (0.68) & 0.368 (0.78) & $-$0.013 (0.60) & $-$0.021 (0.45) & 0.011 (0.55) \\
Prob.\ margin     & 0.141 (0.67) & 0.266 (0.73) & 0.031 (0.55) & \textbf{0.023} (0.50) & 0.056 (0.55) \\
Min probability   & 0.112 (0.66) & 0.355 (0.77) & $-$0.036 (0.58) & $-$0.020 (0.45) & 0.018 (0.55) \\
Seq.\ probability & 0.111 (0.67) & 0.324 (0.76) & 0.034 (0.55) & 0.007 (0.48) & 0.054 (0.55) \\
LogReg ensemble   & 0.144 (0.68) & \textbf{0.378} (0.79) & 0.003 (0.59) & $-$0.020 (0.45) & 0.014 (0.55) \\
\midrule
Pooled $r_s$ & 0.83 & 0.42 & 0.56 & 0.81 & 0.83 \\
Pair-demeaned $r_s$ & 0.48 & 0.90 & $-$0.09 & 0.94 & 0.40 \\
\bottomrule
\end{tabular}
\end{table}

\subsection{Dataset Grading Details}
\label{app:grading}

We use dataset-specific binary correctness labels. MMLU responses are graded
by exact match to the selected multiple-choice letter. TriviaQA uses normalized
exact match after lowercasing and removing punctuation, articles, and excess
whitespace. MATH is graded with a symbolic-equivalence checker based on SymPy
after extracting the final answer. SimpleQA is graded with the official
short-answer factuality rubric using an LLM judge, mapped to binary
correctness. LiveCodeBench is graded by executing submitted solutions against
the benchmark test cases. Grading is completed before calibration-test
splitting and cascade optimization.

\subsection{Descriptive Pairwise Frontiers}
\label{app:descriptive-frontiers}

Figure~\ref{fig:pareto_frontiers} visualizes the full-sample pairwise
frontiers and switching points. This figure is descriptive: it shows the
geometry of the pairwise-envelope object on the full sample, while the
held-out comparison to full fixed chains, optimized subsequences, and routers is reported in
Figure~\ref{fig:kmodel_vs_envelope}.

\begin{figure}[h]
\centering
\includegraphics[width=\linewidth]{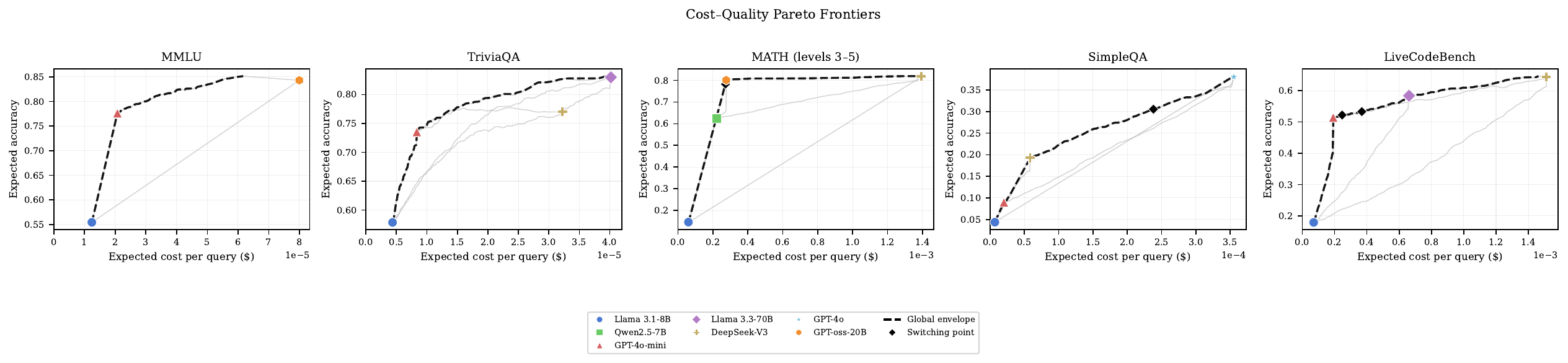}
\caption{Descriptive cost-quality Pareto frontiers per dataset. Gray curves are per-pair frontiers $U^{\dagger,(i,j)}$ among non-dominated models; shapes mark the corresponding single-model endpoints. Black dashed: pairwise envelope $U^*$. Black diamonds: switching points. MiniMax-M2.7 is omitted because it is dominated on all datasets. Computed on the full dataset (2{,}000 examples for MATH, MMLU, TriviaQA, and SimpleQA; 1{,}055 for LiveCodeBench).}
\label{fig:pareto_frontiers}
\end{figure}

\subsection{Two-Model Sweep Verification}
\label{app:foc-verification}
We verify that the k=2 NSGA-II search recovers the same operating points
as direct threshold enumeration. For each Pareto-optimal k=2 trial across
50 calibration-test splits, we interpolate the per-pair threshold-sweep
frontier at the trial's test-evaluated cost and compare to the trial's
test-evaluated quality. The median agreement gap
$|U_{\text{trial}} - U_{\text{frontier}}(C_{\text{trial}})|$ is
$0.000$ on all five datasets, and the 90th percentile is
$\leq 0.0014$ (Table~\ref{tab:foc_verify}). This confirms that the
two-model optimizer is numerically recovering the one-threshold frontier;
the analytical FOC in Section~\ref{sec:twomodel} characterizes interior
points of this frontier under the score-independent cost condition.

\begin{table}[h]
\centering
\small
\caption{Two-model sweep verification: frontier agreement gap
$|U_{\text{trial}} - U_{\text{frontier}}(C_{\text{trial}})|$
between k=2 NSGA-II solutions and the threshold-sweep frontier,
across all Pareto-optimal trials and 50 splits.}
\label{tab:foc_verify}
\begin{tabular}{lccccc}
\toprule
 & MMLU & TriviaQA & MATH & SimpleQA & LiveCodeBench \\
\midrule
Median & 0.0000 & 0.0000 & 0.0000 & 0.0000 & 0.0000 \\
90th pct & 0.0008 & 0.0008 & 0.0011 & 0.0009 & 0.0014 \\
\bottomrule
\end{tabular}
\end{table}

\subsection{Full Escalation Benefit Curves}
\label{app:segments}

Figures~\ref{fig:escben_mmlu}--\ref{fig:escben_livecodebench} show the escalation benefit $m_H(s) - m_L(s)$ for all pairs on each dataset. Each panel annotates the expected-dominance fraction (\texttt{dom}) and the decreasing-benefit fraction (\texttt{dec}). For LiveCodeBench, GPT-oss-20B is excluded from the cascade pool because it is cheaper and more accurate than the higher-cost models, making the high-cost side of the pool degenerate; the figure shows cascade dynamics among the remaining models.

\begin{figure}[p]
\centering
\includegraphics[width=\linewidth]{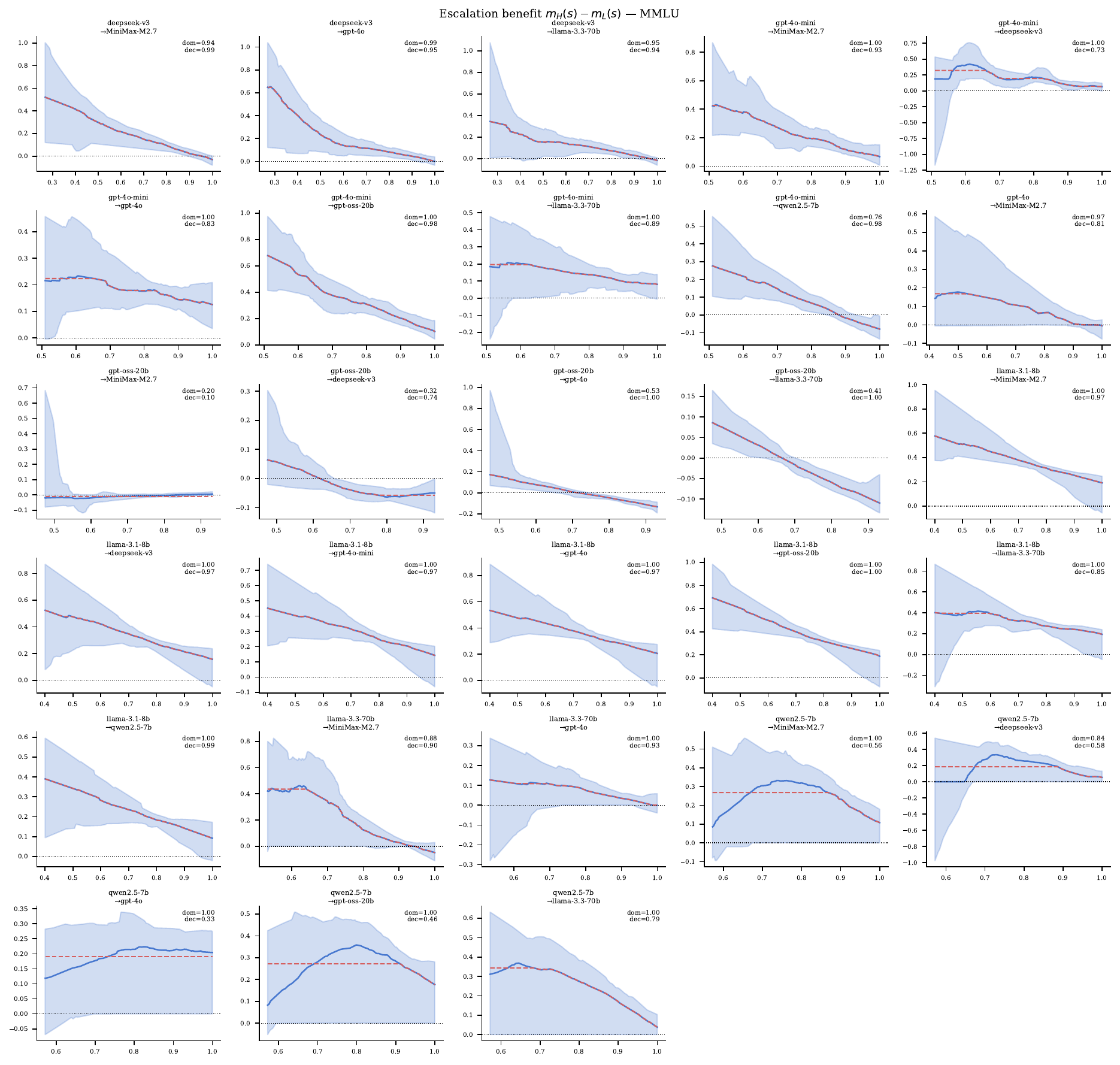}
\caption{Escalation benefit curves for all 28 pairs on MMLU.}
\label{fig:escben_mmlu}
\end{figure}

\begin{figure}[p]
\centering
\includegraphics[width=\linewidth]{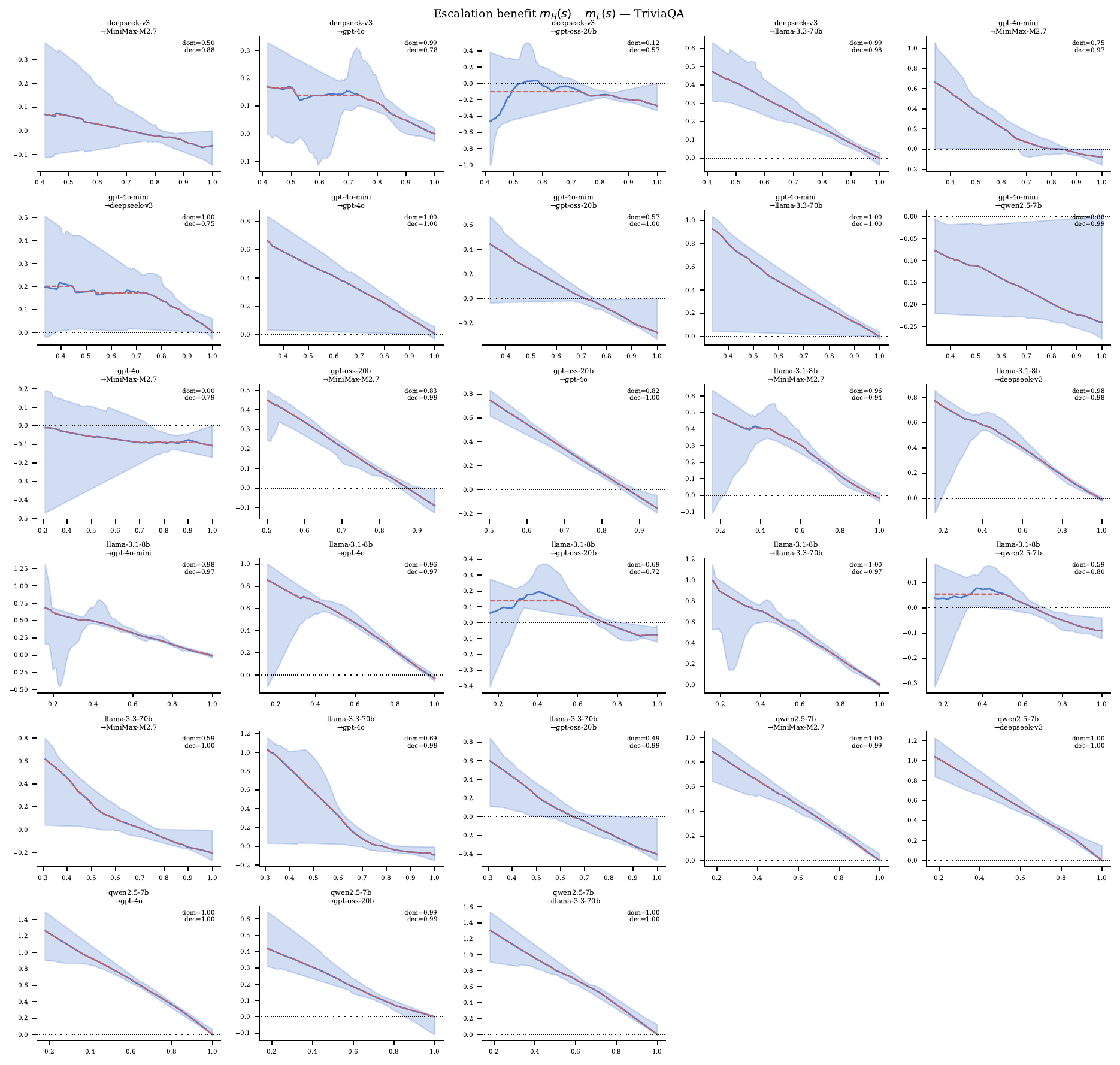}
\caption{Escalation benefit curves for all 28 pairs on TriviaQA.}
\label{fig:escben_triviaqa}
\end{figure}

\begin{figure}[p]
\centering
\includegraphics[width=\linewidth]{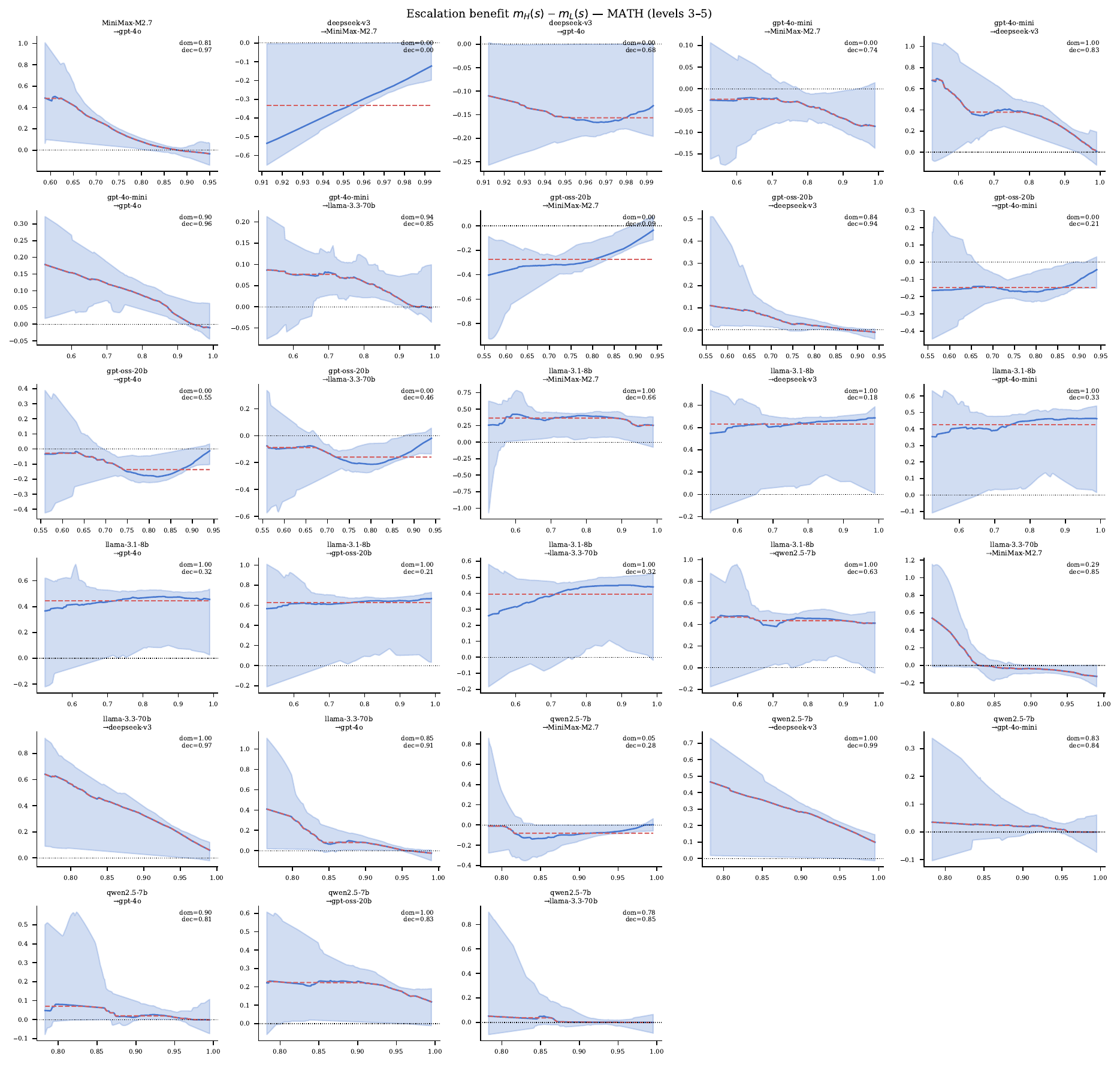}
\caption{Escalation benefit curves for all 28 pairs on MATH (levels 3--5).}
\label{fig:escben_math}
\end{figure}

\begin{figure}[p]
\centering
\includegraphics[width=\linewidth]{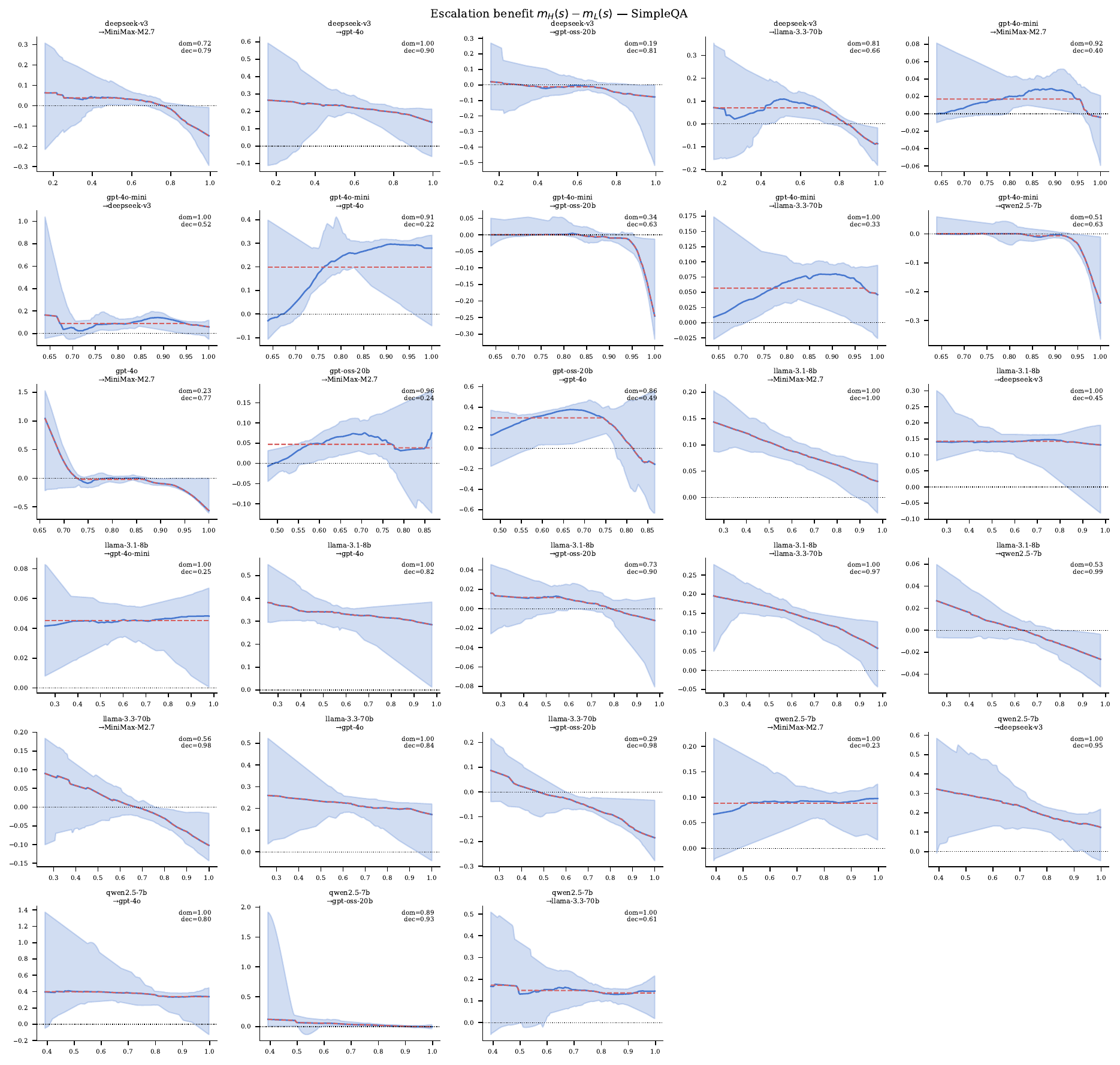}
\caption{Escalation benefit curves for all 28 pairs on SimpleQA.}
\label{fig:escben_simpleqa}
\end{figure}

\begin{figure}[p]
\centering
\includegraphics[width=\linewidth]{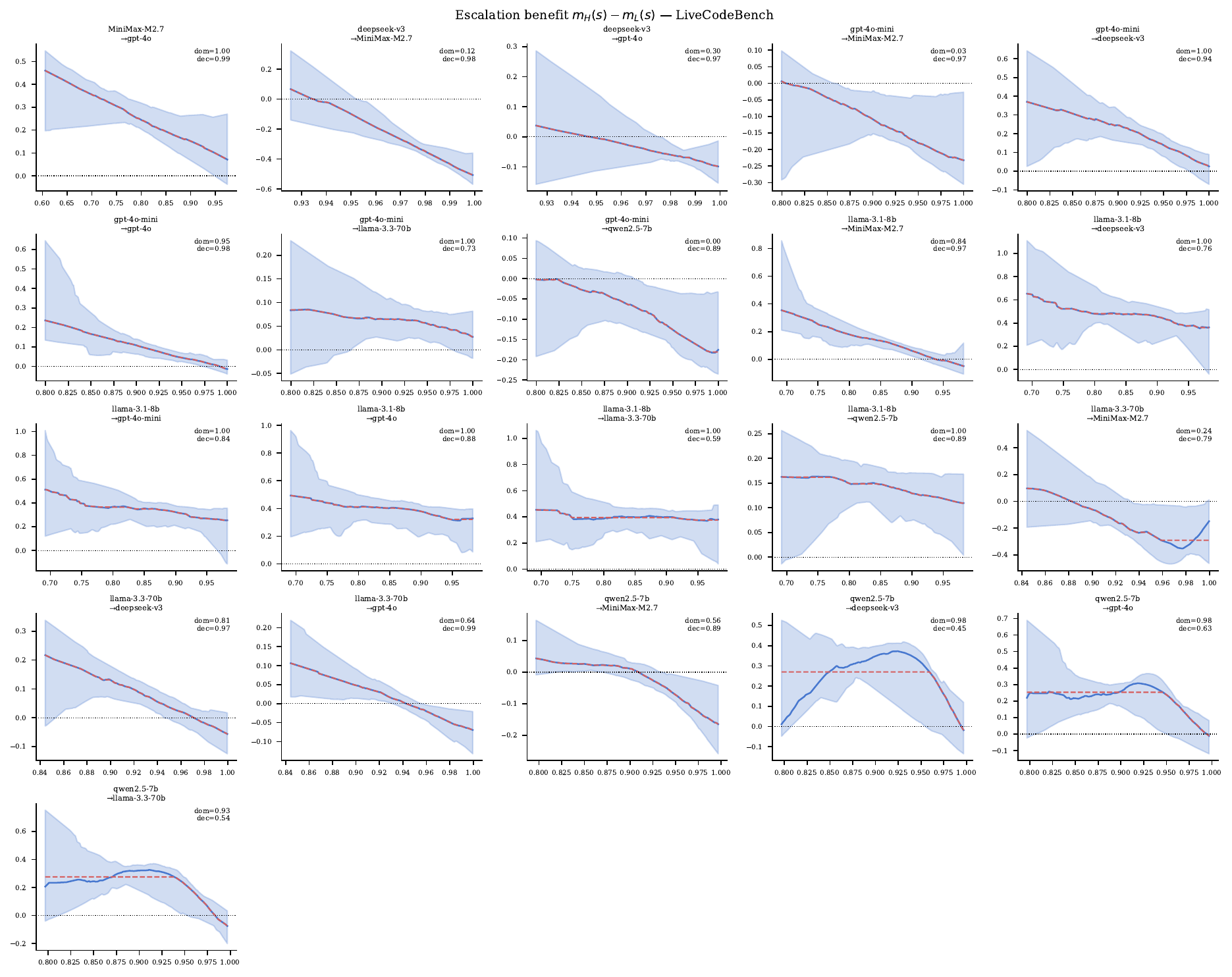}
\caption{Escalation benefit curves for all 21 pairs on LiveCodeBench (GPT-oss-20B excluded; see text).}
\label{fig:escben_livecodebench}
\end{figure}

\subsection{Cost--Score Independence Check}
\label{app:cost-variability}

Per-query costs are computed from realized input and output token counts
using the prices in Table~\ref{tab:token_prices}. Prices reflect public
provider pricing at the time experiments were run (May 2026): OpenAI prices
for GPT-4o and GPT-4o mini, and Together AI prices for all other models.

\begin{table}[h]
\centering
\small
\caption{Token prices used for cost computation, in dollars per million input and output tokens.}
\label{tab:token_prices}
\begin{tabular}{lrr}
\toprule
Model & Input & Output \\
\midrule
Llama 3.1-8B & 0.10 & 0.10 \\
GPT-oss-20B & 0.05 & 0.20 \\
GPT-4o mini & 0.15 & 0.60 \\
Qwen2.5-7B & 0.30 & 0.30 \\
Llama 3.3-70B & 0.88 & 0.88 \\
MiniMax-M2.7 & 0.30 & 1.20 \\
DeepSeek-V3 & 0.60 & 1.70 \\
GPT-4o & 2.50 & 10.00 \\
\bottomrule
\end{tabular}
\end{table}

Proposition~\ref{prop:duality} assumes that the expensive model's expected
escalation cost is independent of the cheap model's confidence score:
$\mathbb{E}[C_H(x) \mid s_L(x)=s] = c_H$. This condition permits
per-query costs to vary, but rules out systematic variation in expected
cost along the thresholding score. To assess this approximation, we compute
the Spearman rank correlation between the cheap model's confidence score
$s_L$ and the expensive model's realized token cost $C_H$ for every
cost-ordered model pair in each dataset.

\begin{table}[h]
\centering
\small
\caption{Cost--score independence diagnostic. Each entry summarizes pair-level
Spearman correlations between the cheap model confidence score $s_L$ and the
expensive model realized cost $C_H$ across cost-ordered pairs. LiveCodeBench has
fewer pairs because GPT-oss-20B is excluded from that dataset's cascade pool.}
\label{tab:cost_score_corr}
\begin{tabular}{lccccc}
\toprule
Dataset & Pairs & Median $|\rho_s|$ & 90th $|\rho_s|$ & Max $|\rho_s|$ & Share $|\rho_s|<0.20$ \\
\midrule
MMLU & 28 & 0.148 & 0.413 & 0.614 & 0.571 \\
TriviaQA & 28 & 0.142 & 0.390 & 0.534 & 0.714 \\
MATH & 28 & 0.150 & 0.487 & 0.506 & 0.607 \\
SimpleQA & 28 & 0.064 & 0.155 & 0.165 & 1.000 \\
LiveCodeBench & 21 & 0.321 & 0.496 & 0.526 & 0.238 \\
\bottomrule
\end{tabular}
\end{table}

\paragraph{Interpretation.} The score-independence approximation is most
plausible on SimpleQA, where all pair-level correlations satisfy
$|\rho_s|<0.20$. MMLU, TriviaQA, and MATH show weak-to-moderate median
correlations but nontrivial tail cases, while LiveCodeBench exhibits the
strongest cost--score dependence. These diagnostics explain why
Figure~\ref{fig:escalation_benefit} should be read as evidence about the
benefit side of the concavity condition, not as a proof that the realized
token-cost frontiers are globally concave. All empirical frontiers and
tables use actual per-query token costs.

\subsection{Calibration-Size Sensitivity}
\label{app:cal-sensitivity}

Figure~\ref{fig:cal_sensitivity} and Table~\ref{tab:cal_sensitivity} report the median quality gap $\Delta = \hat{U}_{\text{sub}} - \hat{U}_{\text{env}}$ at the median operating cost across 50 random splits, as the calibration fraction increases from 50\% to 90\%. The optimal subsequence cascade uses Optuna NSGA-II with 2{,}000 trials and population size 100 at every calibration fraction; the number of trials is held fixed so that the only variation is the amount of calibration data available to select candidate pools, valid pairs, thresholds, and model sequences.

\begin{figure}[h]
\centering
\includegraphics[width=\linewidth]{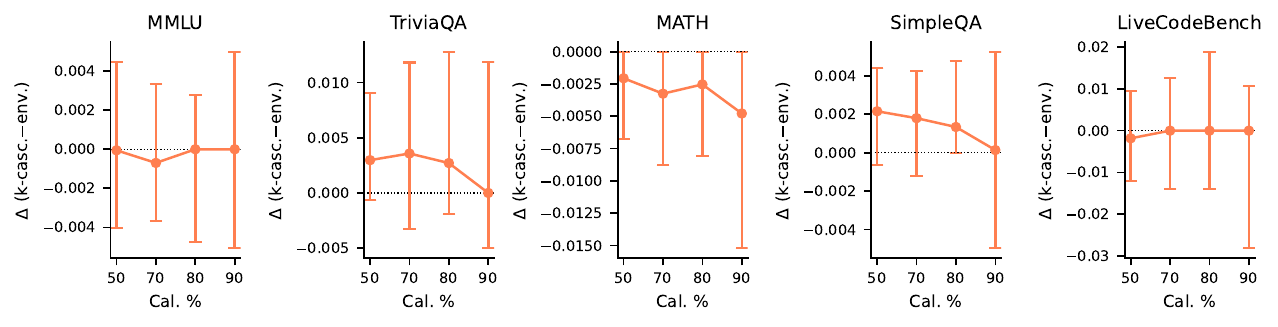}
\caption{Median quality gap $\Delta$ (optimal subsequence cascade minus pairwise envelope) at the median operating cost, as a function of calibration fraction. Error bars span the 10th--90th percentile of per-split deltas across 50 random splits. The dotted line marks $\Delta = 0$. For LiveCodeBench, GPT-oss-20B is excluded from the model pool.}
\label{fig:cal_sensitivity}
\end{figure}

\begin{table}[h]
\centering
\scriptsize
\caption{Calibration-size sensitivity diagnostics. $\Delta$: median quality gap at the median cost (optimal subsequence cascade minus envelope). BWR: band-width ratio (subsequence cascade / envelope), values near 1 indicate similar generalization variance.}
\label{tab:cal_sensitivity}
\resizebox{\linewidth}{!}{%
\begin{tabular}{l rr rr rr rr rr}
\toprule
 & \multicolumn{2}{c}{MMLU} & \multicolumn{2}{c}{TriviaQA} & \multicolumn{2}{c}{MATH} & \multicolumn{2}{c}{SimpleQA} & \multicolumn{2}{c}{LiveCodeBench} \\
\cmidrule(lr){2-3}\cmidrule(lr){4-5}\cmidrule(lr){6-7}\cmidrule(lr){8-9}\cmidrule(lr){10-11}
Split & $\Delta$ & BWR & $\Delta$ & BWR & $\Delta$ & BWR & $\Delta$ & BWR & $\Delta$ & BWR \\
\midrule
50/50 & $-$0.000 & 0.99 & $+$0.003 & 0.94 & $-$0.002 & 1.16 & $+$0.002 & 0.97 & $-$0.002 & 1.04 \\
70/30 & $-$0.001 & 0.93 & $+$0.004 & 0.99 & $-$0.003 & 1.06 & $+$0.002 & 1.06 & $+$0.000 & 1.10 \\
80/20 & $+$0.000 & 0.97 & $+$0.003 & 1.03 & $-$0.003 & 0.99 & $+$0.001 & 1.02 & $+$0.000 & 0.93 \\
90/10 & $+$0.000 & 1.09 & $+$0.000 & 0.84 & $-$0.005 & 0.99 & $+$0.000 & 0.88 & $+$0.000 & 1.09 \\
\bottomrule
\end{tabular}%
}
\end{table}

$\Delta$ remains within about $0.005$ accuracy of zero at every calibration fraction. Increasing the calibration fraction does not produce a monotone improvement: the largest positive gaps occur on TriviaQA at 50/50--80/20, while MATH remains slightly negative and LiveCodeBench alternates around zero. BWR shows no systematic decline with increasing calibration fraction, confirming that the optimal subsequence cascade does not generalize better as the calibration set grows. The result is therefore structural: additional calibration data does not reveal a practically meaningful subsequence-cascade improvement over the pairwise envelope.

\subsection{Threshold Grid Sensitivity}
\label{app:grid-sensitivity}

Table~\ref{tab:grid_sensitivity} reports the mean and maximum pointwise
accuracy difference between each grid resolution and the 500-point reference,
across the median envelope over 50 splits. The common interpolation cost
grid is fixed at 500 points in all conditions; only the number of threshold
candidates swept within each pair changes.

\begin{table}[h]
\centering
\small
\caption{Pointwise accuracy difference vs.\ the 500-point reference ($n_\tau = 500$)
for the median pairwise envelope. mean$|d|$ and max$|d|$ are computed over the
500-point cost grid.}
\label{tab:grid_sensitivity}
\resizebox{\linewidth}{!}{%
\begin{tabular}{l cc cc cc cc cc}
\toprule
 & \multicolumn{2}{c}{MMLU} & \multicolumn{2}{c}{TriviaQA}
 & \multicolumn{2}{c}{MATH} & \multicolumn{2}{c}{SimpleQA}
 & \multicolumn{2}{c}{LiveCodeBench} \\
\cmidrule(lr){2-3}\cmidrule(lr){4-5}\cmidrule(lr){6-7}\cmidrule(lr){8-9}\cmidrule(lr){10-11}
$n_\tau$ & mean$|d|$ & max$|d|$ & mean$|d|$ & max$|d|$
         & mean$|d|$ & max$|d|$ & mean$|d|$ & max$|d|$
         & mean$|d|$ & max$|d|$ \\
\midrule
 50  & 0.0041 & 0.0130 & 0.0010 & 0.0065 & 0.0006 & 0.0072 & 0.0008 & 0.0081 & 0.0020 & 0.0089 \\
100  & 0.0030 & 0.0099 & 0.0006 & 0.0048 & 0.0003 & 0.0042 & 0.0005 & 0.0051 & 0.0010 & 0.0078 \\
200  & 0.0018 & 0.0050 & 0.0004 & 0.0040 & 0.0002 & 0.0029 & 0.0002 & 0.0039 & 0.0006 & 0.0046 \\
500  & \multicolumn{2}{c}{(reference)} & \multicolumn{2}{c}{(reference)}
     & \multicolumn{2}{c}{(reference)} & \multicolumn{2}{c}{(reference)}
     & \multicolumn{2}{c}{(reference)} \\
\bottomrule
\end{tabular}%
}
\end{table}

The maximum accuracy deviation at the 200-point baseline is $\leq 0.005$ on all
five datasets, and the mean deviation is $\leq 0.002$. Even the 50-point grid
stays within $0.014$ accuracy everywhere. The 200-point baseline is therefore adequate for all reported results.

\subsection{Optimizer Sensitivity}
\label{app:opt-sensitivity}

Figure~\ref{fig:opt_sensitivity} and Table~\ref{tab:opt_sensitivity} compare two optimizer choices for the optimal subsequence cascade search: NSGA-II (the default used throughout the paper) and random search. Both share the same trial budget of 2{,}000 evaluations and population size of 100 per split, and results are reported over 50 calibration-test splits. The optimized subsequence search is capped at four models; the full fixed-chain baseline remains uncapped and uses the full calibration-selected non-dominated pool.

The point of this comparison is to test whether the near-zero subsequence-cascade gains depend on NSGA-II's specific search dynamics. If random search, which samples model sequences and threshold parameters uniformly at random, produces a similarly positioned Pareto frontier, the search space is not hiding a materially better subsequence configuration that NSGA-II fails to find. The absence of a practically meaningful improvement is then a structural property of the data rather than an artifact of the optimizer.

\begin{figure}[h]
\centering
\includegraphics[width=\linewidth]{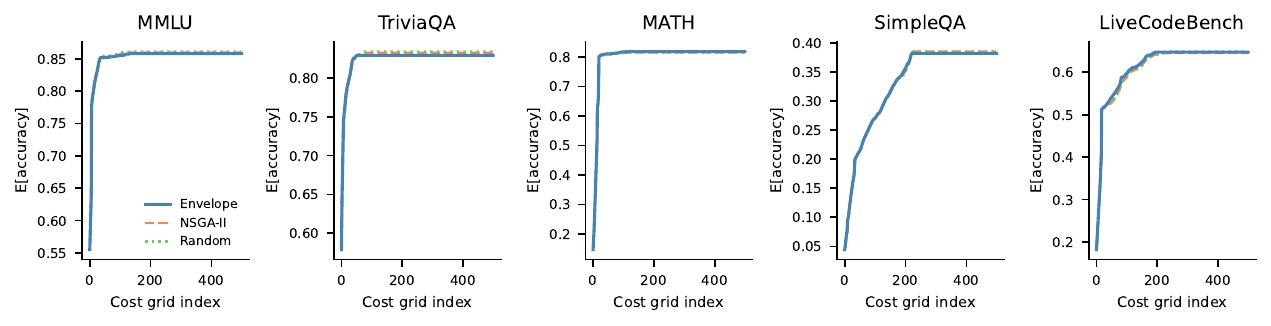}
\caption{Median Pareto frontier (over 50 splits) for NSGA-II and random search versus the pairwise envelope. Both optimizers produce very similar frontiers across all five datasets. For LiveCodeBench, GPT-oss-20B is excluded from the model pool.}
\label{fig:opt_sensitivity}
\end{figure}

\begin{table}[h]
\centering
\scriptsize
\caption{Optimizer sensitivity diagnostics. $\Delta$: gap between the optimizer's median frontier and the pairwise envelope, averaged over a 20-point window around the median operating cost (same computation as the main frontier comparison). BWR: band-width ratio (optimizer / envelope), measuring relative generalization variance.}
\label{tab:opt_sensitivity}
\resizebox{\linewidth}{!}{%
\begin{tabular}{l rr rr rr rr rr}
\toprule
 & \multicolumn{2}{c}{MMLU} & \multicolumn{2}{c}{TriviaQA} & \multicolumn{2}{c}{MATH} & \multicolumn{2}{c}{SimpleQA} & \multicolumn{2}{c}{LiveCodeBench} \\
\cmidrule(lr){2-3}\cmidrule(lr){4-5}\cmidrule(lr){6-7}\cmidrule(lr){8-9}\cmidrule(lr){10-11}
Optimizer & $\Delta$ & BWR & $\Delta$ & BWR & $\Delta$ & BWR & $\Delta$ & BWR & $\Delta$ & BWR \\
\midrule
NSGA-II & $+$0.0018 & 0.97 & $+$0.0033 & 1.06 & $-$0.0025 & 0.94 & $+$0.0029 & 0.97 & $-$0.0018 & 0.93 \\
Random  & $+$0.0031 & 0.93 & $+$0.0048 & 1.16 & $-$0.0020 & 0.92 & $+$0.0029 & 0.95 & $+$0.0010 & 1.03 \\
\bottomrule
\end{tabular}%
}
\end{table}

NSGA-II and random search produce very similar Pareto frontiers, with median gaps no larger than about $0.005$ accuracy in magnitude. The band-width ratios near 1.0 confirm that the two optimizers also exhibit similar generalization variance. This agreement implies that the search space contains no subsequence-cascade configuration that materially improves on the pairwise envelope: the near-zero result is robust to the optimizer and is not a consequence of NSGA-II failing to find such a configuration.

\subsection{Diagnostic Learned Router: Signal Decomposition}
\label{app:learned-router}

The diagnostic learned router trains one logistic regression classifier per calibration-selected non-dominated model
using frozen sentence-transformer embeddings (all-MiniLM-L6-v2, 384 dimensions),
predicting $P(\text{model}_j \text{ correct} \mid \text{query embedding})$.
Similar to the approach by \citet{dekoninck2025unifiedapproachroutingcascading}, at inference, each query is dispatched pre-generation to
$\arg\max_j [P_j(x) - w\,\bar c_j^{\mathrm{cal}}]$ for a scalarization weight
$w$ swept over a log-uniform grid, where $\bar c_j^{\mathrm{cal}}$ is the
model's mean cost on the calibration split. The resulting cost-quality Pareto
frontier is evaluated using realized held-out token costs after dispatch, over
50 calibration-test splits. Classifiers, non-dominated pools, and valid pair
sets are fit on the calibration half of each split using the same stratified
50/50 design as all other comparisons, then evaluated on the held-out half.

Table~\ref{tab:signal_decomp} decomposes the router's advantage by isolating
signal quality from structural differences. We evaluate three methods, all using
the same logistic-regression classifiers:
(i)~\textbf{UQ cascade}: the pairwise envelope with mean token negentropy as the
deferral signal (post-generation);
(ii)~\textbf{Embedding cascade}: the pairwise cascade with
$P(\text{cheap correct} \mid \text{embedding})$ as the deferral signal
(pre-generation, pairwise structure);
(iii)~\textbf{Diagnostic learned router}: pre-generation dispatch to a single model
(pre-generation, $k$-model structure).

The embedding cascade is a weaker deferral signal than mean token negentropy on 4/5
datasets (AUROC 0.49--0.73 vs.\ 0.65--0.77), yet the router (using the same
embedding classifiers) exceeds the UQ cascade on 4/5 datasets. The same-signal
comparison (router vs.\ embedding cascade) shows positive gaps on all five
datasets. The advantage is therefore structural: the router avoids the cheap model's
generation cost $c_L$ on queries routed to other models, whereas any pairwise cascade
always pays $c_L$ first. TriviaQA is the exception (AUROC $\approx 0.49$), where
query embeddings carry near-zero information about correctness and no routing
advantage can compensate.
This baseline is intended to diagnose the structural difference between
pre-generation dispatch and post-generation cascading, not to benchmark the
state of the art in learned routing.

\begin{table}[h]
\centering
\caption{Signal decomposition. $\Delta$ = median quality gap vs.\ UQ cascade (pairwise
envelope); AUROC = mean over pool models of ROC-AUC for the routing signal against
binary correctness labels, averaged over 50 splits.}
\label{tab:signal_decomp}
\small
\resizebox{\linewidth}{!}{%
\begin{tabular}{l rr rr rr rr rr}
\toprule
& \multicolumn{2}{c}{MMLU} & \multicolumn{2}{c}{TriviaQA} & \multicolumn{2}{c}{MATH (levels 3–5)} & \multicolumn{2}{c}{SimpleQA} & \multicolumn{2}{c}{LiveCodeBench} \\
\cmidrule(lr){2-3} \cmidrule(lr){4-5} \cmidrule(lr){6-7} \cmidrule(lr){8-9} \cmidrule(lr){10-11}
Method & $\Delta$ & AUROC & $\Delta$ & AUROC & $\Delta$ & AUROC & $\Delta$ & AUROC & $\Delta$ & AUROC \\
\midrule
UQ cascade (envelope) & $+0.0000$ & 0.714 & $+0.0000$ & 0.746 & $+0.0000$ & 0.647 & $+0.0000$ & 0.699 & $+0.0000$ & 0.773 \\
Embedding cascade & $-0.0079$ & 0.657 & $-0.0200$ & 0.490 & $+0.0000$ & 0.715 & $-0.0036$ & 0.608 & $-0.0082$ & 0.725 \\
Diagnostic learned router & $+0.0027$ & 0.657 & $-0.0319$ & 0.490 & $+0.0022$ & 0.715 & $+0.0142$ & 0.608 & $+0.0181$ & 0.725 \\
\bottomrule
\end{tabular}
}
\end{table}

\end{document}